\documentclass{article}
\usepackage{iclr2026_conference,times}

\usepackage{graphicx}
\usepackage{hyperref}
\usepackage{url}
\usepackage{amsmath,amsthm,amsfonts,amssymb,mathtools}
\allowdisplaybreaks
\usepackage{xspace}
\usepackage{algorithm}
\usepackage[noend]{algpseudocode}
\usepackage{xparse}
\usepackage{bm}
\usepackage{subfig}
\usepackage{booktabs}
\usepackage[normalem]{ulem}
\usepackage{hyperref}
\usepackage{threeparttable}
\usepackage{capt-of}

\newtheorem{problem}{Problem}
\newtheorem{proposition}{Proposition}
\newtheorem{lemma}{Lemma}
\newtheorem{corollary}{Corollary}

\NewDocumentCommand{\prob}{o m}{
  \ensuremath{
    \mathbb{P}
    \IfValueT{#1}{_{#1}}
    \left(#2\right)
  }
}
\newcommand{\listt}[1]{\left[#1\right]}
\newcommand{\set}[1]{\left\{#1\right\}}

\newcommand{\mmid}{\, \middle\vert\,}

\newcommand{\X}{\ensuremath{\mathbf{X}}\xspace}
\newcommand{\x}{\ensuremath{\mathbf{x}}\xspace}
\newcommand{\distribution}{\ensuremath{\textbf{P}}\xspace}
\newcommand{\real}{\ensuremath{\mathbb{R}}\xspace}
\newcommand{\lb}{\ensuremath{\mathbf{l}}\xspace}
\newcommand{\ub}{\ensuremath{\mathbf{u}}\xspace}
\newcommand{\preactivation}{\ensuremath{\textbf{y}}\xspace}
\newcommand{\inputset}{\ensuremath{\mathcal{D}}\xspace}
\newcommand{\setOfConstraints}{\ensuremath{\mathcal{C}}\xspace}
\newcommand{\nodelist}{\ensuremath{\mathcal{B}}\xspace}
\newcommand{\probSatis}{\ensuremath{\eta}\xspace}
\newcommand{\true}{\text{TRUE}\xspace}
\newcommand{\false}{\text{FALSE}\xspace}

\newcommand{\outputset}{\ensuremath{\mathcal{Y}}\xspace}
\newcommand{\gap}[1]{q(#1)}
\newcommand{\gapub}[1]{\bar q(#1)}
\newcommand{\callLirpa}{\ensuremath{\mathit{ComputeLinearBounds}}\xspace}
\newcommand{\boundBranchProb}{\ensuremath{\mathit{BoundBranchProbability}}\xspace}
\newcommand{\boundGlobalProb}{\ensuremath{\mathit{BoundGlobalProbability}}\xspace}

\newcommand{\markPreactivationToSplit}{\ensuremath{\mathit{MarkPreactivationToSplitOn}}\xspace}
\newcommand{\generateNewConstraints}{\ensuremath{\mathit{GenerateNewConstraints}}\xspace}

\newcommand{\babprob}{\textsf{BaB-prob}\xspace}
\newcommand{\babprobvanilla}{\textsf{BaB-prob-ordered}\xspace}
\newcommand{\babprobbabsrprob}{\textsf{BaB+BaBSR-prob}\xspace}
\newcommand{\proven}{\textsf{PROVEN}\xspace}
\newcommand{\pv}{\textsf{PV}\xspace}
\newcommand{\sdp}{\textsf{SDP}\xspace}
\newcommand{\lirpa}{LiRPA\xspace}
\newcommand{\babsr}{\textsf{BaBSR}\xspace}
\newcommand{\babsrprob}{\textsf{BaBSR-prob}\xspace}

\title{\babprob: Branch and Bound with Preactivation Splitting for Probabilistic Verification of Neural Networks}

\author{Fangji Wang\\
School of Aerospace Engineering\\
Institute for Robotics and Intelligent Machines\\
Georgia Institute of Technology\\
Atlanta, GA 30318, USA\\
\texttt{fwang406@gatech.edu}
\And
Panagiotis Tsiotras\\
School of Aerospace Engineering\\
Institute for Robotics and Intelligent Machines\\
Georgia Institute of Technology\\
Atlanta, GA 30318, USA\\
\texttt{tsiotras@gatech.edu}
}

\iclrfinalcopy
\begin{document}

\maketitle

\begin{abstract}
Branch-and-bound with preactivation splitting has been shown highly effective for deterministic verification of neural networks. 
In this paper, we extend this framework to the probabilistic setting.
We propose \babprob that iteratively divides the original problem into subproblems by splitting preactivations and leverages linear bounds computed by linear bound propagation to bound the probability for each subproblem.
We prove soundness and completeness of \babprob for feedforward-ReLU neural networks.
Furthermore, we introduce the notion of uncertainty level and design two efficient strategies for preactivation splitting, yielding \babprobvanilla and \babprobbabsrprob. 
We evaluate \babprob on untrained networks, MNIST and CIFAR-10 models, respectively, and VNN-COMP 2025 benchmarks.
Across these settings, our approach consistently outperforms state-of-the-art approaches in medium- to high-dimensional input problems.
\end{abstract}

\section{Introduction}
\textit{Probabilistic verification of neural networks} asks whether a given satisfies a formal specification with high probability under a prescribed input distribution.
Formally, given a neural network $f\colon\real^n\rightarrow \real^m$, a random input $\X\in \real^n$ following distribution $\distribution$, a specification set $\outputset\subset \real^m$, and a desired probability threshold $\eta\in (0,1]$, the goal of probabilistic verification is to determine whether 
\begin{equation}
    \prob{f(\X)\in\outputset}\ge \eta
\label{eq:general_formulation}
\end{equation}
holds.
In this paper, we consider the problem where $f=g^{(N)}\circ \sigma^{(N-1)}\circ \cdots \circ \sigma^{(1)}\circ g^{(1)}\colon \real^n\rightarrow \real$ is a feedforward-ReLU neural network, $g^{(1)},\ldots,g^{(N)}$ are linear layers and $\sigma^{(1)},\ldots,\sigma^{(N-1)}$ are ReLU layers, given as $\sigma_\mathrm{ReLU}(x)=\max(0,x)$, and $\outputset=\{\mathbf{y}\in\real^m \colon  \mathbf{c}^\mathsf{T}\mathbf{y}+d> 0\}$ is a half-space.
By folding the $\mathbf{c}^\mathsf{T}\mathbf{y}+d$ into the last linear layer to obtain an equivalent scalar network $f\colon \real^n\rightarrow \real$, we finally formulate our probabilistic verification problem as follows:
\begin{problem}
Determine whether the following statement is \true:
\begin{equation}
    \prob{f(\X) > 0} \ge \eta,
    \label{eq:prob_formulation}
\end{equation}
where $f=g^{(N)}\circ \sigma^{(N-1)}\circ \cdots \circ \sigma^{(1)}\circ g^{(1)}\colon \real^n\rightarrow \real$ is a feedforward-ReLU neural network, $\X\sim\distribution$ is a random input in $\real^n$, $\eta\in (0,1]$ is the desired probability threshold. 
\label{problem:our_probabilistic_verification}    
\end{problem}

A verification approach is sound if, whenever it declares Problem~\ref{problem:our_probabilistic_verification} to be \true, then Problem~\ref{problem:our_probabilistic_verification} is indeed \true; it is complete if it declares Problem~\ref{problem:our_probabilistic_verification} to be \true whenever Problem~\ref{problem:our_probabilistic_verification} is \true.

Existing approaches for probabilistic verification can be divided into two categories: analytical approaches and sampling-based approaches~(\cite{sivaramakrishnan2024saver}).
Sampling-based approaches sample a number of elements from the distribution, pass them through the neural network, then output a statistical result for the probabilistic verification problem with a specified confidence.
A major advantage of sampling-based approaches is that they can be applied to arbitrary neural networks.
However, the required number of samples increases with a higher confidence.
For further details about sampling-based approaches, the reader is referred to~\cite{anderson2022data, devonport2020estimating, mangal2019robustness, pautov2022cc, sivaramakrishnan2024saver}.

In this paper, we mainly focus on analytical approaches.
Analytical approaches can deterministically ascertain the correctness of Problem~\ref{problem:our_probabilistic_verification}. 
\cite{pilipovsky2023probabilistic} compute the distribution of $f(\X)$ using characteristic functions.
However, their approach assumes independence between neurons at each layer. 
Besides, their approach is computationally impractical for deep and wide neural networks.
\proven~(\cite{weng2019proven}) enjoys the best scalability among all current analytical methods.
In particular, \proven leverages linear bound propagation based approaches~(\cite{zhang2018efficient,xu2020fast,wang2018efficient}) to get linear bounds on $f(\x)$ and bound the probability in Equation~(\ref{eq:prob_formulation}) using these linear bounds.
\proven is sound but incomplete due to the relaxations.
\cite{fazlyab2019probabilistic} relaxes the original problem to a semidefinite programming problem by abstracting the nonlinear activation functions by affine and quadratic constraints.
This approach is also sound but incomplete and lacks scalability to medium-size or large neural networks.

Branch and bound (BaB) is a technique widely used in deterministic verification~(\cite{bunel2020branch, wang2021beta, zhang2022general}).
BaB iteratively divides the problem into subproblems and solves individually.
The subproblems are typically solved with tighter relaxations than the original problem under the constraints of the subproblems.
After a finite number of splits, BaB-based approaches generate sound and complete results.
BaB-based approaches for deterministic verification typically use one of two strategies: input-space splitting and preactivation splitting.
Existing works~(\cite{boetius2025solvingprobabilisticverificationproblems, marzari2024enumerating}) that use BaB for probabilistic verification focus on input-space splitting.
However, as suggested in~\cite{bunel2020branch}, preactivation splitting is considerably more efficient than input-space splitting for large networks, especially when the input space is high-dimensional.

In this work, we extend BaB with preactivation splitting to probabilistic verification.
It should be noted that our approach can be used for general activation functions $\sigma^{(k)}$, however, we only prove completeness for ReLU activation function, given as $\sigma_\mathrm{ReLU}(x)=\max(0,x)$.
The proof can be extended to general piecewise-linear activation functions.

Our main contributions are as follows:
\begin{itemize}
    \item We propose \babprob, the first branch-and-bound with preactivation splitting approach for probabilistic verification of neural networks.
    \item We prove the soundness and the completeness of \babprob for feedforward-ReLU networks, which also adapts to general piecewise linear activation functions.
    \item We introduce the concept of uncertainty level and leverage it to design two splitting strategies, yielding \babprobvanilla and \babprobbabsrprob.
    \item Experimental results show that \babprobvanilla and \babprobbabsrprob significantly outperform state-of-the-art probabilistic verification approaches when the input is medium-to high-dimensional.
\end{itemize}

\section{Preliminaries}
In this section, we first introduce some notation used in the remainder of the paper. 
Then, we outline the linear bound propagation technique for deterministic verification used later in the proposed approach.

\subsection{Notation}
We denote  the number of neurons of layer $g^{(k)}$ and layer $\sigma^{(k)}$ as $n_k$, $k=1, \ldots, N-1$, and define $n_0\coloneq n$ and $n_N\coloneq 1$.
For $k=1,\ldots, N-1$,
we define $\preactivation^{(k)}$ as the \textit{preactivation} of layer $\sigma^{(k)}$ and $\preactivation^{(k)}(\cdot)\coloneqq g^{(k)}\circ \sigma^{(k-1)}\circ \cdots\circ g^{(1)}(\cdot)$ as the \textit{preactivation function} of layer $\sigma^{(k)}$.
For uniformity, we sometimes also use $\preactivation^{(N)}(\cdot)$ to denote $f(\cdot)$.
We use the shorthand notation $[k_1, k_2]$, where $k_1\le k_2$ to denote the sequence $\{k_1, k_1+1, 
\ldots,k_2\}$.
We also use $[k]$ as shorthand for $[1,k]$ and $a^{[k]}$ for the sequence $\{a^{(1)}, a^{(2)}, \ldots, a^{(k)}\}$.
We use the bold font $\mathbf{v}$ to denote a vector, and the regular font with subscript $v_j$ to denote its $j$-th entry.
For two vectors $\mathbf{u}$ and $\mathbf{v}$ in $\real^\ell$, we denote $\mathbf{u}\le \mathbf{v}$ if $u_i\le v_i$ for all $i\in[\ell]$.

\subsection{Linear Bound Propagation for Deterministic Verification}
\label{sec:lirpa}

Different from verifying whether $f(\X)> 0$ is satisfied with high probability, deterministic verification verifies whether $f(\x)>0$ for all \x in a given input set $\inputset$.
Formal verification is NP-hard due to its nonlinearity and nonconvexity~(\cite{katz2017reluplex}).
To address this, linear bound propagation relaxes the problem by replacing the nonlinearities of $f$ by linear functions.
Specifically, let $\lb^{[N-1]}$ and $\ub^{[N-1]}$ be precomputed lower and upper bounds for $\preactivation^{[N-1]}$ such that $\ell_{j}^{(k)}\le y_j^{(k)}(\x)\le u_{j}^{(k)}$ for all $\x \in \inputset$ and $k\in[N-1]$.
The precomputed bounds can be obtained via interval arithmetic or by applying linear bound propagation in the same way as for $f(\x)$.
Using the precomputed bounds, the ReLU functions can be relaxed by linear lower and upper functions.
Relaxation is not required if $\ell_j^{(k)}\ge 0$ or $u_j^{(k)}\le 0$ for some $y_k^{(k)}$.
With these linear relaxations, $f(\x)$ can be bounded by backward propagating through the layers, yielding linear lower and upper bounds, $\underline f(\x)$ and $\bar f(\x)$, on $f(\x)$, satisfying $\underline{f}(\x) \le f(\x) \le \bar f(\x)$, for all $\x\in \inputset$.
It then verifies the problem using the linear bounds.

We can also leverage linear bound propagation to compute linear bounds for preactivation functions.
The linear bounds for both of $f(\x)$ and the preactivation functions will be used later in the proposed approach.
A key advantage of linear bound propagation is its high degree of parallelism, making it well-suited for GPU acceleration.
For more details about linear bound propagation, readers are referred to~\cite{zhang2018efficient}.

\section{\babprob}
\begin{figure}[t]
    \centering
    \includegraphics[width=0.85\linewidth]{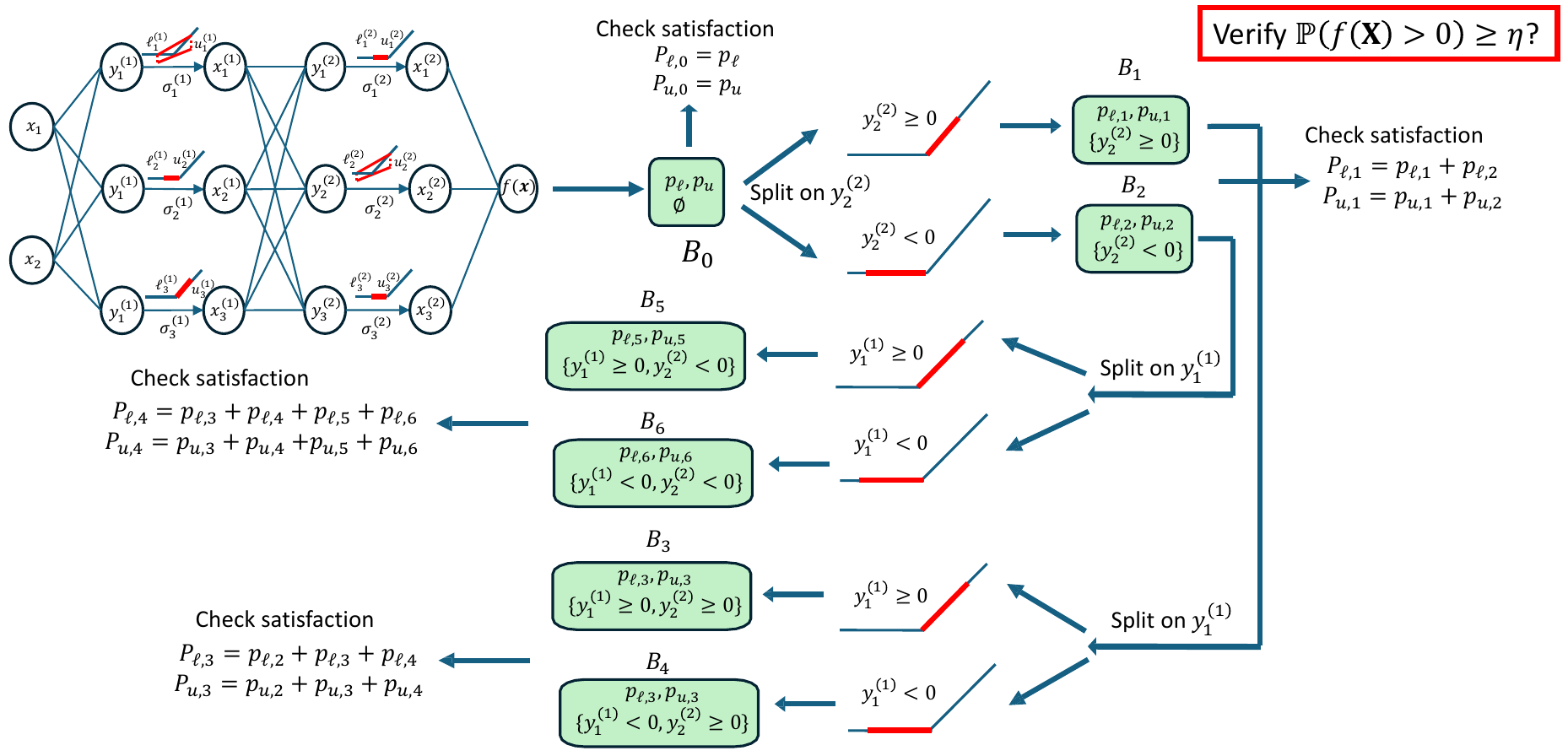}
    \caption{
        An illustrative example of the key idea behind \babprob.
        Two preactivations have negative lower bounds and positive upper bounds.
        \babprob iteratively splits them into negative and nonnegative cases to tighten the global probability bound, until the original problem is verified. 
    }
    \label{fig:bab_prob_example}
\end{figure}

We now present our BaB-based approach for probabilistic verification, which we call \babprob.  
\babprob iteratively partitions Problem~\ref{problem:our_probabilistic_verification} into two subproblems by splitting a preactivation into negative and nonnegative cases, thereby rendering the ReLU function linear within each subproblem. 
It then applies linear bound propagation to compute linear bounds for $f$ and the preactivations $\preactivation^{[N-1]}$.
Using these bounds, \babprob computes probability bounds for each subproblem and aggregates them to obtain a global probability bound for the original problem.
This process continues until Problem~\ref{problem:our_probabilistic_verification} is certified as either \true or \false. Figure~\ref{fig:bab_prob_example} provides an example illustrating this key idea.

The remainder of this section is organized as follows:  
Section~\ref{sec:framwork} outlines the overall framework of \babprob;  
Section~\ref{sec:branch_prob_bounds} explains a key component---bounding the probability of a branch;  
Section~\ref{sec:splitting_strategies} proposes two strategies for selecting preactivations to split on based on the concept of uncertainty level.  
The pseudocode and further implementation details of the framework are provided in Appendix~\ref{appendix:framework}.

\subsection{Overall Framework}
\label{sec:framwork}
A \textit{set of constraints} is of the form $\setOfConstraints=\big\{y_{j_1}^{(k_1)}\geq 0, \ldots,y_{j_s}^{(k_s)}\ge 0,y_{j_{s+1}}^{(k_{s+1})}<0,\ldots y_{j_t}^{(k_t)}< 0\big\}$.
We say that $\setOfConstraints$ is \textit{satisfied} if all the constraints in \setOfConstraints hold simultaneously.
We can decompose \setOfConstraints layer by layer as $\setOfConstraints =\setOfConstraints^{(1)}\cup\cdots\cup\setOfConstraints^{(N-1)}$, where $\setOfConstraints^{(k)}$ denotes the set of constraints on $\preactivation^{(k)}$ and is the empty set when there is no constraint on $\preactivation^{(k)}$.
When \setOfConstraints appears inside a probability, $\prob{\cdot}$, it denotes the event that all the constraints in \setOfConstraints are satisfied.
For instance, $\prob{\setOfConstraints}=\prob{y_{j_1}^{(k_1)}(\X)\ge0,\ldots,y_{j_s}^{(k_s)}(\X)\ge0,\,y_{j_{s+1}}^{(k_{s+1})}(\X)<0,\ldots,y_{j_t}^{(k_t)}(\X)<0}$.
A \textit{branch} is defined as $B\coloneqq \langle p_\ell,p_u,\setOfConstraints\rangle$, where $\setOfConstraints$ is the set of constraints for $B$, and $p_\ell$ and $p_u$ are lower and upper bounds on $\prob{f(\X)>0,\,\setOfConstraints}$, respectively.
We define the \textit{probability} of the branch $B$ as $\prob{B}\coloneqq\prob{f(\X)>0,\,\setOfConstraints}$.
For a preactivation $y_{j}^{(k)}$ with precomputed lower bound $\ell_{j}^{(k)}$ and upper bound $u_{j}^{(k)}$, we say that the preactivation is \textit{stable} in $B=\langle p_\ell,p_u,\setOfConstraints\rangle$ if $\ell_{j}^{(k)}\ge 0$, or $u_{j}^{(k)}\le 0$, or \setOfConstraints imposes a constraint on $y_{j}^{(k)}$; otherwise, we say that it is \textit{unstable} in $B$.
We assume $\X$ is bounded in $\inputset$; if not, take $\inputset$ as a $(1-\delta)$-confidence set for $\X$.

During initialization, \nodelist is created to store all candidate branches.
\babprob then performs linear bound propagation over \inputset \textbf{under no constraint} to obtain linear bounds for $f$ and $\preactivation^{[N-1]}$, from which the probability $\prob{f(\X)>0}$ can be bounded between $p_\ell$ and $p_u$.
The root branch, $B_o := \langle p_\ell, p_u, \setOfConstraints = \varnothing \rangle$, is then inserted into \nodelist.
At each iteration, \babprob first aggregates across branches in \nodelist to compute the global probability bounds and terminates if Problem~\ref{problem:our_probabilistic_verification} is already certified with the current bounds.
Otherwise it pops the branch $B=\langle p_\ell,p_u,\setOfConstraints\rangle$ with largest gap $(p_u-p_\ell)$.
The gap must be positive, since otherwise the global lower and upper bounds would coincide and \babprob would have already terminated.
It then picks an unstable preactivation $y^{(k)}_{j}$ in $B$, and creates two new sets of constraints
$\setOfConstraints_1=\setOfConstraints\cup\{y^{(k)}_{j}\ge0\}$ and $\setOfConstraints_2=\setOfConstraints\cup\{y^{(k)}_{j}<0\}$.
Applying linear bound propagation over \inputset \textbf{under the constraints in $\setOfConstraints_i$}, \babprob recomputes linear bounds for $f$ and $\preactivation^{[N-1]}$, yielding $\underline{f}_i(\x)$, $\bar f_i(\x)$ and $\underline{\preactivation}_i^{[N-1]}(\x)$, $\bar \preactivation_i^{[N-1]}$, such that
\begin{subequations}
\begin{align}
    \underline f_i(\x) 
        &\le f(\x) \le \bar f_i(\x), 
        && \forall \x\in \inputset,~ \setOfConstraints_i \text{ satisfied}, \label{eq:linear_bounds_for_f}\\
    \underline{\preactivation}_i^{(k)}(\x) 
        &\le \preactivation^{(k)}(\x) \le \bar\preactivation_i^{(k)}(\x), 
        && \forall \x \in \inputset,~ \setOfConstraints_i^{[k-1]} \text{ satisfied},~k\in[N-1].\label{eq:linear_bounds_for_preactivation}
\end{align}
\label{eq:linear_bounds_by_linear_bound_propagation}
\end{subequations}
With the linear bounds, it computes probability bounds $p_\ell$ and $p_u$ for $\prob{f(\X)>0,\, \setOfConstraints_i}$.
It then inserts $B_i:=\langle p_{\ell,i},p_{u,i},\setOfConstraints_i\rangle$ to \nodelist, and continues.

Corollary~\ref{cor:sound_and_complete} shows that \babprob is both sound and complete.

\subsection{Probability Bounds for Branches}
\label{sec:branch_prob_bounds}
Let $B$ be a given branch, and $\setOfConstraints=\big\{y_{j_1}^{(k_1)}\geq 0, \ldots,y_{j_s}^{(k_s)}\ge 0,y_{j_{s+1}}^{(k_{s+1})}<0,\ldots y_{j_t}^{(k_t)}< 0\big\}$ be the set of constraints for it.
Assume the linear bounds for $f$ and $\preactivation^{[N-1]}$ under the constraints in $\setOfConstraints$ are  $\underline f(\x)=\underline{\mathbf{a}}^\mathsf{T}\x+\underline{b}$, $\bar f(\x)=\bar{\mathbf{a}}^\mathsf{T}\x+\bar{b}$, $\underline \preactivation^{(k)}(\x)=\underline{\mathbf{A}}^{(k)}\x+\underline{\mathbf{b}}^{(k)}$, $\bar \preactivation^{(k)}(\x)=\bar{\mathbf{A}}^{(k)}\x+\bar{\mathbf{b}}^{(k)}$, for $k\in[N-1]$.
Then, the probability bounds for $B$ are given by
\begin{equation}
    \begin{aligned}
        p_\ell=\prob{\underline{\mathbf{P}}\X+\underline{\mathbf{q}}\le 0}\text{,} \quad p_u=\prob{\bar{\mathbf{P}}\X+\bar{\mathbf{q}}\le 0},
        \label{eq:compute_branch_prob_bounds}
    \end{aligned}
\end{equation}
where,
\begin{equation}
    \begin{aligned}
        \underline{\mathbf{P}}=\begin{pmatrix}
            -\underline{\mathbf{a}}^\mathsf{T}\\
            -\underline{\mathbf{A}}^{(k_1)}_{j_1,:}\\
            \vdots\\
            -\underline{\mathbf{A}}^{(k_s)}_{j_s,:}\\
            \bar{\mathbf{A}}^{(k_{s+1})}_{j_{s+1},:}\\
            \vdots\\
            \bar{\mathbf{A}}^{(k_t)}_{j_t,:}\\
        \end{pmatrix},
        \quad
        \underline{\mathbf{q}}=\begin{pmatrix}
            -\underline{b}\\
            -\underline{b}^{(k_1)}_{j_1}\\
            \vdots\\
            -\underline{b}^{(k_s)}_{j_s}\\
            \bar{b}^{(k_{s+1})}_{j_{s+1}}\\
            \vdots\\
            \bar{b}^{(k_t)}_{j_t}\\
        \end{pmatrix},
        \quad
        \bar{\mathbf{P}}=\begin{pmatrix}
            -\bar{\mathbf{a}}^\mathsf{T}\\
            -\bar{\mathbf{A}}^{(k_1)}_{j_1,:}\\
            \vdots\\
            -\bar{\mathbf{A}}^{(k_s)}_{j_s,:}\\
            \underline{\mathbf{A}}^{(k_{s+1})}_{j_{s+1},:}\\
            \vdots\\
            \underline{\mathbf{A}}^{(k_t)}_{j_t,:}\\
        \end{pmatrix},
        \quad
        \bar{\mathbf{q}}=\begin{pmatrix}
            -\bar{b}\\
            -\bar{b}^{(k_1)}_{j_1}\\
            \vdots\\
            -\bar{b}^{(k_s)}_{j_s}\\
            \underline{b}^{(k_{s+1})}_{j_{s+1}}\\
            \vdots\\
            \underline{b}^{(k_t)}_{j_t}\\
        \end{pmatrix}.
    \end{aligned}\nonumber
\end{equation}
It is proved in Proposition~\ref{prop:branch_prob_bounds} that $p_\ell$ and 
$p_u$ in Equation~(\ref{eq:compute_branch_prob_bounds}) are indeed lower and upper bounds on $\prob{f(\X)>0,\,\setOfConstraints}$.
Note that Equation~(\ref{eq:compute_branch_prob_bounds}) corresponds to the cumulative density functions of linear transformations of \X at the origin.
If $\distribution$ is Gaussian, Equation~(\ref{eq:compute_branch_prob_bounds}) can be computed easily since the linear transformation of a Gaussian random variable variable is still Gaussian.
If \distribution is a general distribution, Equation~(\ref{eq:compute_branch_prob_bounds}) can be computed by integrating the probability density functions of $\X$ or using Monte Carlo sampling.

\subsection{Splitting Strategies}
\label{sec:splitting_strategies}
We first derive the \textit{uncertainty levels} of unstable preactivations in a branch $B$, which reflects the looseness of the probability bounds of $B$ that Equation~(\ref{eq:compute_branch_prob_bounds}) will provide after splitting on the preactivations.
Then, we propose two splitting strategies based on uncertainty level.

\textbf{Uncertainty level.\em}
Consider a branch $B=\langle p_\ell,p_u,\setOfConstraints\rangle$, where
$\setOfConstraints=\{y_{j_1}^{(k_1)}\geq 0, \ldots,y_{j_s}^{(k_s)}\ge 0,\, y_{j_{s+1}}^{(k_{s+1})}<0,\ldots, y_{j_t}^{(k_t)}< 0\}$.
Let $y_j^{(k)}$ be an unstable preactivation in $B$.
Let $B_1=\langle p_{\ell,1}, p_{u,1}, \setOfConstraints_1\rangle$ and $B_2=\langle p_{\ell,2}, p_{u,2}, \setOfConstraints_2\rangle$ be the children branches generated if we split on $y_j^{(k)}$, where $\setOfConstraints_1=\setOfConstraints\cup \{y_j^{(k)}\ge 0\}$ and $\setOfConstraints_2=\setOfConstraints\cup \{y_j^{(k)}< 0\}$.
Assume $\underline f_1(x)$, $\bar f_1(\x)$, $\underline{\preactivation}_1^{[N-1]}(\x)$, $\bar{\preactivation}_1^{[N-1]}(\x)$ and $\underline f_2(\x)$, $\bar f_2(\x)$ $\underline{\preactivation}_2^{[N-1]}(\x)$, $\bar{\preactivation}_2^{[N-1]}(\x)$ are the linear upper and lower bounds for $f$ and $\preactivation^{[N-1]}$ obtained by linear bound propagation under the constraints in $\setOfConstraints_1$ and $\setOfConstraints_2$, respectively.

By Proposition~\ref{prop:branch_prob_gap},
\begin{subequations}
\label{eq:branch_gap}
\begin{align}
    &p_{u,1} - p_{\ell,1} &&\ge \prob{
    \begin{aligned}
        &\bar y_{j,1}^{(k)}(\X)\ge 0,\,  \underline y_{j,1}^{(k)}(\X)< 0, \,\bar f_1(\X)> 0,\\
        & \bar y_{j_1,1}^{(k_1)}(\X)\geq 0, \ldots,\bar y_{j_s,1}^{(k_s)}(\X)\ge 0,\,\underline y_{j_{s+1},1}^{(k_{s+1})}(\X)<0,\ldots, \underline y_{j_t,1}^{(k_t)}(\X)< 0
    \end{aligned}} \nonumber \\
    & &&\eqqcolon \gap{B_1},\\
    &p_{u,2} - p_{\ell,2} &&\ge \prob{
    \begin{aligned}
        &\bar y_{j,2}^{(k)}(\X)\ge 0, \, \underline y_{j,2}^{(k)}(\X)< 0, \,\bar f_2(\X)> 0,\\
        &\bar y_{j_1,2}^{(k_1)}(\X)\geq 0, \ldots,\bar y_{j_s,2}^{(k_s)}(\X)\ge 0,\,\underline y_{j_{s+1},2}^{(k_{s+1})}(\X)<0,\ldots, \underline y_{j_t,2}^{(k_t)}(\X)< 0
    \end{aligned}} \nonumber\\
    & &&\eqqcolon \gap{B_2}.
\end{align}
\end{subequations}
Equation~(\ref{eq:branch_gap}) implies, in particular, that there are gaps, denoted by $\gap{B_1}$ and $\gap{B_2}$, between the lower and upper bounds on $\prob{B_1}$ and $\prob{B_2}$.
Note that these gaps arise from the relaxation inherent in our approach. 
If no relaxation were used when computing $\prob{B_1}$ and $\prob{B_2}$, the gaps would vanish.
A natural idea is to split on the unstable preactivation that will result in smallest gap.
However, computing the gap exactly for every unstable neuron is computationally intractable: it requires computing the linear bounds of $f$ and $\preactivation^{[N-1]}$ under the corresponding constraints, together with the probabilities on the right-hand side of Equation~(\ref{eq:branch_gap}), for each unstable preactivation.
Instead, we use upper bounds on $\gap{B_1}$ and $\gap{B_2}$, denoted by $\gapub{B_1}$ and $\gapub{B_2}$, which can be computed more easily.
The upper bounds are given by
\begin{subequations}
    \begin{align}
        &\gap{B_1}\le \prob{\bar y_{j,1}^{(k)}(\X)\ge 0,  \underline y_{j,1}^{(k)}(\X)< 0} \eqqcolon \gapub{B_1},\\
        &\gap{B_2}\le \prob{\bar y_{j,2}^{(k)}(\X)\ge 0,  \underline y_{j,2}^{(k)}(\X)< 0} \eqqcolon \gapub{B_2}.
    \end{align}
\end{subequations}
In many linear bound propagation based approaches, such as CROWN~(\cite{zhang2018efficient}), the computed linear lower and upper bounds for $y_j^{(k)}(\x)$ are determined by the constraints on the preactivations of the first $k-1$ ReLU layers.
Therefore, and since $\setOfConstraints_1$ and $\setOfConstraints_2$ have the same constraints as $\setOfConstraints$ on $\preactivation^{[k-1]}$, the linear lower and upper bounds on $y_j^{(k)}(\x)$ for $B_1$ and $B_2$ are same as those for $B$, that is,
\begin{equation}
    \begin{aligned}
        \underline y_{j,1}^{(k)}(\x)=\underline y_{j,2}^{(k)}(\x)=\underline y_{j}^{(k)}(\x),\quad
        \bar y_{j,1}^{(k)}(\x)=\bar y_{j,2}^{(k)}(\x)=\bar y_{j}^{(k)}(\x).
    \end{aligned}
    \label{eq:same_linear_bounds}
\end{equation}
Therefore, $\gapub{B_1}=\gapub{B_2}$, and we denote them by the \textit{uncertainty level} of unstable preactivation $y_j^{(k)}$ in branch $B$, that is,
\begin{equation}
\begin{aligned}
    \gap{B;y_j^{(k)}}:&=\prob{\bar y_j^{(k)}(\X)\ge 0,\,\underline{y}_j^{(k)}(\X)<0}
    =\prob{\begin{pmatrix}
        -\bar{\mathbf{A}}_{j,:}^{(k)}\\
        \underline{\mathbf{A}}_{j,:}^{(k)}
    \end{pmatrix}\X+\begin{pmatrix}
        -\bar{b}_j^{(k)}\\
        \underline{b}_j^{(k)}
    \end{pmatrix}\le 0},
\end{aligned}
\label{eq:uncertainty_level_definition}
\end{equation}
where $\underline{y}_j^{(k)}(\x)=\underline{\mathbf{A}}_{j,:}^{(k)}(\x)+\underline{b}_j^{(k)}$ and $\bar{y}_j^{(k)}(\x)=\bar{\mathbf{A}}_{j,:}^{(k)}(\x)+\bar{b}_j^{(k)}$ have already been obtained when applying linear bound propagation for $B$.
Computing $\gap{B;y_j^{(k)}}$ is tractable, since it corresponds to the cumulative density function of a linear transformation of \X at the origin.
It should be noted that even if Equation~(\ref{eq:same_linear_bounds}) does not hold, one can still view the above defined $\gap{B;\cdot}$ as an approximation of $\gap{B_1}$ and $\gap{B_2}$ and use it as a guidance for selecting unstable preactivations to split.

Based on uncertainty level, we propose two strategies for selecting the preactivation to split for a given branch $B=\langle p_\ell,p_u,\setOfConstraints\rangle$. 

\textbf{\babprobvanilla.\em}
The first strategy is na\"ive, but turns out to be useful in many cases, especially in the verification of MLP models.
We call \babprob with this na\"ive strategy \babprobvanilla.
\babprobvanilla starts from the first ReLU layer and checks whether there is any unstable preactivation in this layer.
If so, \babprobvanilla selects one such preactivation to split on;
otherwise, \babprobvanilla proceeds to the next ReLU layer, until an unstable preactivation is found.
$B$ must contain an unstable preactivation; otherwise, Proposition~\ref{prop:tight_prob_bounds} implies $p_\ell=p_u$ and $B$ would not have been popped from \nodelist.
Since \babprobvanilla splits on an unstable preactivation $y_j^{(k)}$ only when there is no unstable preactivation in the first $k-1$ ReLU layers, it follows that $\underline{y}_j^{(k)}(\x)=\bar{y}_j^{(k)}(\x)$, since no relaxation is required to bound $y_j^{(k)}$.
Therefore, $\gap{B;y_j^{(k)}}=0$.
This explains why \babprobvanilla works well in many cases: \babprobvanilla always splits on the preactivation with uncertainty level of zero.

\textbf{\babprobbabsrprob.\em} Splitting on the unstable preactivation with uncertainty level of 0 does not necessarily minimize the gaps $p_{u,1}-p_{\ell,1}$ and $p_{u,2}-p_{\ell,2}$ since the uncertainty level is an upper bound on a lower bound on these gaps.
Therefore, only considering uncertainty levels can sometimes be misguiding, making the algorithm inefficient, especially for CNN models.
Actually, in image-classification scenarios, an MLP may contain hundreds or thousands of neurons in one activation layer, while a CNN can contain tens of thousands of neurons due to their channel and spatial dimensions.
To resolve this issue, we propose to combine heuristics from deterministic verification with the uncertainty levels of preactivations to design splitting strategies.
Specifically, we combine \babsr~(\cite{bunel2020branch}), which estimates the improvement on the lower bound of $f(\x)$ after splitting a preactivation, with the uncertainty levels of preactivations.
We call this heuristic \babsrprob and call \babprob with \babsrprob heuristics \babprobbabsrprob.

\babprobbabsrprob first computes \babsr scores for all the unstable preactivations (see~\cite{bunel2020branch} for details).
It then enumerates the unstable preactivations in decreasing order of their \babsr scores and evaluates the uncertainty level of each.
If the uncertainty level of the current preactivation does not exceed a specified nonnegative threshold, \babprobbabsrprob splits on it; otherwise, it proceeds to the next preactivation with a lower \babsr score.
Note that there must exist at least one unstable preactivation in $B$ with uncertainty level below the threshold, since the one chosen by \babprobvanilla always has uncertainty level 0.

\subsection{Theoretical Results}
In this section, we provide the theoretical properties of \babprob and defer their proofs to Appendix~\ref{appendix:proof}.
\begin{proposition}
\label{prop:branch_prob_bounds}
The values of $p_\ell$ and $p_u$ computed in Equation~(\ref{eq:compute_branch_prob_bounds}) are lower and upper bounds on $\prob{f(\X)>0,\,\setOfConstraints}$.
\end{proposition}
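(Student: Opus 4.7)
The plan is to prove the two bounds as set-inclusion statements in the underlying probability space. Write $E_u := \{\bar{\mathbf{P}}\X + \bar{\mathbf{q}} \le 0\}$, $E_\ell := \{\underline{\mathbf{P}}\X + \underline{\mathbf{q}} \le 0\}$, and $A := \{f(\X) > 0,\,\setOfConstraints\}$. It suffices to show $A \subseteq E_u$, which gives $\prob{A} \le p_u$, and $E_\ell \subseteq A$ (modulo a measure-zero boundary set), which gives $p_\ell \le \prob{A}$. The three blocks of rows in the $\mathbf{P}$-matrices correspond, in the same order, to the three ingredients of $A$: the ``$f>0$'' slot, the $s$ constraints of the form $y_{j_i}^{(k_i)} \ge 0$, and the $t-s$ constraints of the form $y_{j_i}^{(k_i)} < 0$; matching up each slot with its row is essentially a bookkeeping exercise, and throughout I will use that $\X \in \inputset$ almost surely (by the standing assumption on $\distribution$) so the linear bounds of Equation~(\ref{eq:linear_bounds_by_linear_bound_propagation}) apply at $\X$ whenever their constraint preconditions hold.

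For the upper bound, I would fix $\omega \in A$. Since $\setOfConstraints$ is satisfied at $\X(\omega)$, every prefix $\setOfConstraints^{[k-1]}$ is also satisfied, so by Equation~(\ref{eq:linear_bounds_by_linear_bound_propagation}) all linear bounds on $f$ and on each $y_{j_i}^{(k_i)}$ are valid at $\X(\omega)$. Then $f(\X) > 0 \Rightarrow \bar f(\X) \ge f(\X) > 0$ yields the first row of $\bar{\mathbf{P}}\X + \bar{\mathbf{q}} \le 0$; each constraint $y_{j_i}^{(k_i)}(\X) \ge 0$ gives $\bar y_{j_i}^{(k_i)}(\X) \ge 0$, matching the middle block; and each constraint $y_{j_i}^{(k_i)}(\X) < 0$ gives $\underline y_{j_i}^{(k_i)}(\X) \le y_{j_i}^{(k_i)}(\X) < 0$, matching the last block. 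Hence $\omega \in E_u$.

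The lower bound is more delicate because validity of the bound on layer $k$ in~(\ref{eq:linear_bounds_for_preactivation}) requires $\setOfConstraints^{[k-1]}$ to already hold---exactly what we are trying to derive. I would break the apparent circularity by reindexing the constraints so that $k_1 \le k_2 \le \cdots \le k_t$ and inducting on $i$. At stage $i$, the inductive hypothesis yields that all constraints indexed $1,\dots,i-1$ are satisfied; these in particular exhaust $\setOfConstraints^{[k_i-1]}$, so the bound on $y_{j_i}^{(k_i)}$ is valid, and the $i$-th row of $\underline{\mathbf{P}}\X + \underline{\mathbf{q}} \le 0$ forces either $y_{j_i}^{(k_i)}(\X) \ge \underline y_{j_i}^{(k_i)}(\X) \ge 0$ or $y_{j_i}^{(k_i)}(\X) \le \bar y_{j_i}^{(k_i)}(\X) \le 0$ according to the constraint sign. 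After $t$ steps the full set $\setOfConstraints$ is satisfied, so~(\ref{eq:linear_bounds_for_f}) makes the bound on $f$ valid, and the first row of $\underline{\mathbf{P}}\X + \underline{\mathbf{q}} \le 0$ yields $f(\X) \ge \underline f(\X) \ge 0$.

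The main obstacle is this layer-ordered induction for the lower bound; a secondary subtlety is the strict-versus-weak inequality gap: the lower-bound argument only produces $f(\X) \ge 0$ and $y_{j_i}^{(k_i)}(\X) \le 0$ on the ``$<0$'' slots, whereas $A$ is defined with strict inequalities. Under the continuous-distribution assumption standard in this setting (in particular the Gaussian case emphasized immediately after Equation~(\ref{eq:compute_branch_prob_bounds})), each such boundary event has probability zero and can be absorbed without affecting the inequality $p_\ell \le \prob{A}$.
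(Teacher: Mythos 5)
Your proof is correct and follows essentially the same route as the paper: the paper's Lemma~\ref{lemma:core} carries out exactly your layer-ordered induction, establishing the nested inclusions $\bigcap_{r=1}^{k}\underline{\setOfConstraints}_x^{(r)}\subseteq\bigcap_{r=1}^{k}\setOfConstraints_x^{(r)}\subseteq\bigcap_{r=1}^{k}\bar{\setOfConstraints}_x^{(r)}$ and their extension with the $f$-events, after which Proposition~\ref{prop:branch_prob_bounds} is just a matter of taking probabilities. Your explicit handling of the strict-versus-weak inequality boundary is a detail the paper silently absorbs when it equates $\prob{\underline f(\X)>0,\,\ldots}$ with $\prob{\underline{\mathbf{P}}\X+\underline{\mathbf{q}}\le 0}$, which indeed requires the continuous-distribution (measure-zero boundary) assumption you invoke.
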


\begin{proposition}
    \label{prop:branch_prob_gap}
    For a given branch $B=\langle p_\ell, p_u, \setOfConstraints\rangle$, where $\setOfConstraints = \{y^{(k_1)}_{j_1}\ge 0,\ldots, y^{(k_s)}_{j_s}\ge 0,\, y^{(k_{s+1})}_{j_{s+1}}< 0,\ldots, y^{(k_t)}_{j_t}<0,\, y_{j^\star}^{(k^\star)}\ge 0 / y_{j^\star}^{(k^\star)}< 0\}$,
    Assume $\underline{f}(\x)$, $\bar f(\x)$ and $\underline\preactivation^{[N]}(\x)$, $\bar\preactivation^{[N]}(\x)$ are the linear lower and bounds obtained by linear bound propagation for $B$.
    Then,
    \begin{equation}
        p_{u} - p_{\ell} \ge \prob{
        \begin{aligned}
            &\bar f(\X)> 0,\;\bar y_{j^\star}^{(k^\star)}(\X)\ge 0,\;  \underline y_{j^\star}^{(k^\star)}(\X)< 0,\\
            & \bar{y}^{(k_1)}_{j_1}(\X)\ge 0, \ldots, \bar{y}^{(k_s)}_{j_s}(\X)\ge 0,\, \underline{y}^{(k_{s+1})}_{j_{s+1}}(\X)< 0,\ldots, \underline{y}^{(k_{s+1})}_{j_{t}}(\X)< 0
        \end{aligned}}.
    \end{equation}
\end{proposition}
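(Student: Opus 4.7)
The plan is to identify $p_\ell$ and $p_u$ with probabilities of two events $E_\ell$ and $E_u$, observe that $E_\ell\subseteq E_u$ so that $p_u-p_\ell=\prob{E_u\setminus E_\ell}$, and exhibit the right-hand side event $E^{\star}$ of the proposition as a subset of $E_u\setminus E_\ell$. Set-monotonicity of $\mathbb{P}$ then gives the claimed inequality, with no probabilistic machinery beyond that.

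First I would unpack the definitions from Equation~(\ref{eq:compute_branch_prob_bounds}) to describe $E_\ell$ and $E_u$ explicitly. The event $E_\ell$ is the intersection of the \emph{sufficient} linear conditions coming from the bounds: $\underline f(\X)\ge 0$, together with $\underline y^{(k_i)}_{j_i}(\X)\ge 0$ for each $i\le s$, $\bar y^{(k_i)}_{j_i}(\X)<0$ for each $s<i\le t$, and the matching sufficient condition for the distinguished index $(k^{\star},j^{\star})$. The event $E_u$ is the intersection of the \emph{necessary} linear conditions: $\bar f(\X)\ge 0$, together with $\bar y^{(k_i)}_{j_i}(\X)\ge 0$ for $i\le s$, $\underline y^{(k_i)}_{j_i}(\X)<0$ for $s<i\le t$, and the matching necessary condition at $(k^{\star},j^{\star})$. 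Since $\underline f\le \bar f$ and $\underline y\le\bar y$ pointwise, every condition defining $E_\ell$ implies the corresponding condition defining $E_u$, so $E_\ell\subseteq E_u$ and therefore $p_u-p_\ell=\prob{E_u\setminus E_\ell}$.

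Next I would verify $E^{\star}\subseteq E_u\setminus E_\ell$. On the indices $(k_1,j_1),\ldots,(k_t,j_t)$ the event $E^{\star}$ imposes exactly the necessary-side conditions that define $E_u$, and $\bar f(\X)>0$ implies $\bar f(\X)\ge 0$, so these parts of $E^{\star}$ sit inside the corresponding parts of $E_u$. At the distinguished index, $E^{\star}$ demands both $\bar y^{(k^{\star})}_{j^{\star}}(\X)\ge 0$ and $\underline y^{(k^{\star})}_{j^{\star}}(\X)<0$ simultaneously. A short case split on the sign of the constraint at $(k^{\star},j^{\star})$ finishes the argument: if the constraint is $y^{(k^{\star})}_{j^{\star}}\ge 0$, then $\bar y^{(k^{\star})}_{j^{\star}}(\X)\ge 0$ supplies the $E_u$ necessary condition while $\underline y^{(k^{\star})}_{j^{\star}}(\X)<0$ contradicts the $E_\ell$ sufficient condition $\underline y^{(k^{\star})}_{j^{\star}}(\X)\ge 0$; if the constraint is $y^{(k^{\star})}_{j^{\star}}<0$, the two inequalities play the symmetric roles. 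Either way, $E^{\star}\subseteq E_u$ and $E^{\star}\cap E_\ell=\varnothing$, completing the containment.

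The step that requires the most care is the sign bookkeeping at the $j^{\star}$-coordinate: one has to check in both sign regimes that the pair of inequalities $\bar y^{(k^{\star})}_{j^{\star}}(\X)\ge 0$ and $\underline y^{(k^{\star})}_{j^{\star}}(\X)<0$ together kick $\X$ out of $E_\ell$ while keeping it inside $E_u$. Apart from that indexing, the proof reduces to pointwise monotonicity of the linear bounds and monotonicity of $\mathbb{P}$ under set inclusion.
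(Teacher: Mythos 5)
Your overall strategy---realize $p_\ell$ and $p_u$ as probabilities of events $E_\ell\subseteq E_u$, so that $p_u-p_\ell=\prob{E_u\setminus E_\ell}$, and then check by inspection that the right-hand-side event is contained in $E_u$ and disjoint from $E_\ell$---is sound, and once completed it is genuinely shorter than the paper's own proof, which instead peels off the constrained layers one at a time through a chain of conditional-probability manipulations. The second half of your argument (containment in $E_u$, disjointness from $E_\ell$ via the sign split at $(k^\star,j^\star)$) is correct and is indeed pure bookkeeping.

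The gap is in your justification of $E_\ell\subseteq E_u$. You assert that $\underline f\le\bar f$ and $\underline y^{(k)}_j\le\bar y^{(k)}_j$ hold pointwise, so that ``every condition defining $E_\ell$ implies the corresponding condition defining $E_u$.'' Neither claim is available here. By Equation~(\ref{eq:linear_bounds_for_preactivation}), the sandwich $\underline y^{(k)}_j(\x)\le y^{(k)}_j(\x)\le\bar y^{(k)}_j(\x)$ is guaranteed only for those $\x\in\inputset$ at which the constraints $\setOfConstraints^{[k-1]}$ on the earlier layers are \emph{actually} satisfied, and likewise Equation~(\ref{eq:linear_bounds_for_f}) gives $\underline f\le f\le\bar f$ only where all of $\setOfConstraints$ holds; off those regions the two linear functions are produced by relaxations that are simply invalid there, and need not be ordered at all. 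Consequently the inclusion cannot be verified condition-by-condition: $\underline y^{(k)}_j(\x)\ge 0$ alone does not imply $\bar y^{(k)}_j(\x)\ge 0$. What is true---and what the paper isolates as Lemma~\ref{lemma:core}---is the cumulative statement proved by induction on the layer index: the conjunction of all lower-bound conditions through layer $k$ forces the true constraints through layer $k$ to hold, which in turn validates the linear bounds at layer $k$ and yields the upper-bound conditions there (Equations~(\ref{eq:core+lemma_without_f}) and~(\ref{eq:core+lemma_with_f})). If you replace your pointwise-domination sentence with an appeal to that inductive containment, your proof closes, and it gives a cleaner route to the proposition than the paper's own layer-peeling argument.
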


\begin{proposition}
\label{prop:global_prob_bound}
    The global lower bound and upper bounds, denoted as $P_\ell$ and $P_u$, computed at the beginning of each iteration are indeed lower and upper bounds on $\prob{f(\X)>0}$.
\end{proposition}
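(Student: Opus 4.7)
\textbf{Proof proposal for Proposition~\ref{prop:global_prob_bound}.}
The plan is to show that at the start of every iteration, the branches currently stored in \nodelist induce a partition of the sample space into mutually exclusive and exhaustive events, after which the global bounds follow by summing the per-branch bounds guaranteed by Proposition~\ref{prop:branch_prob_bounds}. Concretely, I would prove the following invariant by induction on the iteration index $t$: if $\nodelist_t = \{B_1, \ldots, B_{M_t}\}$ with $B_m = \langle p_{\ell,m}, p_{u,m}, \setOfConstraints_m\rangle$, then the events $\{\setOfConstraints_m\}_{m=1}^{M_t}$ are pairwise disjoint and satisfy $\bigcup_{m=1}^{M_t} \{\setOfConstraints_m \text{ satisfied}\} = \Omega$ (up to a null set), where $\Omega$ is the underlying sample space.

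The base case $t=0$ is immediate since $\nodelist_0 = \{B_o\}$ with $\setOfConstraints_o = \varnothing$, i.e., the trivially satisfied constraint set whose event is all of $\Omega$. For the inductive step, the only modification between iterations is popping some branch $B = \langle p_\ell, p_u, \setOfConstraints\rangle$ and inserting two children $B_1, B_2$ with constraint sets $\setOfConstraints \cup \{y_j^{(k)} \ge 0\}$ and $\setOfConstraints \cup \{y_j^{(k)} < 0\}$. Since the two added constraints are exhaustive (every real number is either $\ge 0$ or $< 0$) and mutually exclusive, the event $\{\setOfConstraints\text{ satisfied}\}$ is exactly the disjoint union of $\{\setOfConstraints_1\text{ satisfied}\}$ and $\{\setOfConstraints_2\text{ satisfied}\}$. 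Disjointness of the new collection with the remaining branches and exhaustiveness thus carry over from the inductive hypothesis.

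Granting the partition invariant, I would then write
\begin{equation}
    \prob{f(\X)>0} = \sum_{m=1}^{M_t} \prob{f(\X)>0,\,\setOfConstraints_m},\nonumber
\end{equation}
by countable additivity, and apply Proposition~\ref{prop:branch_prob_bounds} termwise to obtain
\begin{equation}
    P_\ell \coloneqq \sum_{m=1}^{M_t} p_{\ell,m} \;\le\; \prob{f(\X)>0} \;\le\; \sum_{m=1}^{M_t} p_{u,m} \eqqcolon P_u,\nonumber
\end{equation}
which is the claim. The expected main obstacle is not any deep inequality but rather pinning down the bookkeeping of the partition invariant carefully, in particular verifying that the constraint-sets attached to the two children really correspond to disjoint, exhaustive sub-events of the parent event, and that this is preserved by the pop-then-insert update to \nodelist. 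Once that is made precise, the remainder reduces to invoking Proposition~\ref{prop:branch_prob_bounds} and additivity of probability.
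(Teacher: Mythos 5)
Your proposal is correct and follows essentially the same route as the paper's proof, which likewise observes that splitting $B$ into $B_1,B_2$ via the exhaustive, mutually exclusive constraints $y_j^{(k)}\ge 0$ and $y_j^{(k)}<0$ gives $\prob{B}=\prob{B_1}+\prob{B_2}$, hence inductively $\prob{f(\X)>0}=\sum_{B\in\nodelist}\prob{B}$, and then applies Proposition~\ref{prop:branch_prob_bounds} termwise. You merely make the partition invariant on \nodelist more explicit than the paper does, which is a welcome but not substantively different elaboration.
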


\begin{proposition}
\label{prop:tight_prob_bounds}
    If $B=\langle p_\ell,p_u,\setOfConstraints\rangle$ has no unstable preactivation, then $\prob{B}=p_\ell=p_u$.
\end{proposition}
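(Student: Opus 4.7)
The plan is to show that when $B$ has no unstable preactivations, the linear bounds produced by linear bound propagation (under the constraints in $\setOfConstraints$) are exact on the constraint-satisfying region, which forces $\underline{\mathbf{P}}=\bar{\mathbf{P}}$ and $\underline{\mathbf{q}}=\bar{\mathbf{q}}$, and identifies the event in Equation~(\ref{eq:compute_branch_prob_bounds}) with $\{f(\X)>0\}\cap\setOfConstraints$.

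First I would consider the three ways a preactivation $y_j^{(k)}$ can be stable in $B$: (i) $\ell_j^{(k)}\ge 0$, (ii) $u_j^{(k)}\le 0$, or (iii) $\setOfConstraints$ imposes a sign constraint on $y_j^{(k)}$. In each case, for every $\x\in\inputset$ on which $\setOfConstraints$ is satisfied, $\sigma_\mathrm{ReLU}(y_j^{(k)}(\x))$ is exactly an affine function of $y_j^{(k)}(\x)$---either the identity (cases (i) and the $y_j^{(k)}\ge 0$ branch of (iii)) or zero (cases (ii) and the $y_j^{(k)}<0$ branch of (iii)). Thus, when linear bound propagation backs through any ReLU layer to bound either $f$ or a preactivation $y_{j_i}^{(k_i)}$ appearing in $\setOfConstraints$, it can choose the upper and lower linear envelopes to coincide with the exact form. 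Because composition with the subsequent linear layers preserves this equality, back-substitution yields $\underline f(\x)=\bar f(\x)=f(\x)$ and $\underline y_{j_i}^{(k_i)}(\x)=\bar y_{j_i}^{(k_i)}(\x)=y_{j_i}^{(k_i)}(\x)$ for every $\x\in\inputset$ with $\setOfConstraints$ satisfied.

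Next, by direct inspection of the matrices in Equation~(\ref{eq:compute_branch_prob_bounds}), this equality of linear bounds gives $\underline{\mathbf{P}}=\bar{\mathbf{P}}$ and $\underline{\mathbf{q}}=\bar{\mathbf{q}}$, so immediately $p_\ell=p_u$. Moreover, on the constraint-satisfying region the event $\{\underline{\mathbf{P}}\X+\underline{\mathbf{q}}\le 0\}$ expands row by row into the event that $f(\X)\ge 0$, that $y_{j_i}^{(k_i)}(\X)\ge 0$ for $i\le s$, and that $y_{j_i}^{(k_i)}(\X)\le 0$ for $i>s$; this coincides with $\{f(\X)>0\}\cap\setOfConstraints$ up to events of probability zero under standard absolute-continuity assumptions on \distribution. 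Hence $p_\ell=p_u=\prob{B}$.

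The main obstacle is verifying that tightness is preserved across the full backward pass: every ReLU layer encountered when bounding either $f$ or any $y_{j_i}^{(k_i)}$ involves only stable preactivations (immediate, since by hypothesis \emph{all} preactivations in $B$ are stable), and one must argue that choosing identical upper and lower envelopes at each such ReLU actually yields identical back-substituted linear forms at the input, which reduces to the observation that the back-substitution map is linear in the chosen envelopes. A minor technical point is the strict-versus-non-strict inequality mismatch between $f(\X)>0$ and $\underline f(\X)\ge 0$, which contributes zero probability under the standard continuity assumptions implicitly placed on \distribution throughout the paper.
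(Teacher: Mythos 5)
Your proposal is correct and follows essentially the same route as the paper's proof: no unstable preactivation means no relaxation in the backward pass, so the linear lower and upper bounds coincide with the exact (piecewise-)linear forms, the set inclusions of Lemma~\ref{lemma:core} collapse to equalities, and hence $p_\ell=p_u=\prob{B}$. You additionally spell out the case analysis for stability and flag the strict-versus-non-strict inequality issue, which the paper glosses over; these are welcome refinements rather than a different argument.
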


\begin{proposition}
\label{prop:finite_time_termination}
    \babprob terminates in finite time.
\end{proposition}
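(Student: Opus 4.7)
The plan is to bound the total number of branches that \babprob can ever create. The key combinatorial fact is that each split permanently turns one unstable preactivation into a stable one, and there are only finitely many preactivations in the network.

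First I would argue that every branch which actually gets popped from \nodelist contains at least one unstable preactivation. The algorithm pops the branch with the largest gap $p_u-p_\ell$; as the paragraph before Corollary~\ref{cor:sound_and_complete} already notes, this gap must be strictly positive, otherwise all per-branch gaps vanish, the aggregated bounds $P_\ell$ and $P_u$ of Proposition~\ref{prop:global_prob_bound} coincide, and the certification check at the top of the iteration terminates the algorithm. Taking the contrapositive of Proposition~\ref{prop:tight_prob_bounds}, a branch with $p_u>p_\ell$ must contain at least one unstable preactivation, so \selectNeuronToSplit always has a valid candidate to split on.

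Next I would show that each split strictly decreases the number of unstable preactivations in the two children. If \babprob splits a branch $B=\langle p_\ell,p_u,\setOfConstraints\rangle$ on an unstable preactivation $y_j^{(k)}$, then by construction both child constraint sets $\setOfConstraints_1=\setOfConstraints\cup\{y_j^{(k)}\ge 0\}$ and $\setOfConstraints_2=\setOfConstraints\cup\{y_j^{(k)}<0\}$ impose a constraint on $y_j^{(k)}$. By the definition of stability given at the end of Section~\ref{sec:framwork}, $y_j^{(k)}$ is stable in both $B_1$ and $B_2$; meanwhile every preactivation already stable in $B$ stays stable because adding constraints cannot remove a precomputed sign or an existing constraint. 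Hence the count of unstable preactivations strictly drops by at least one from parent to child.

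Finally, let $M=\sum_{k=1}^{N-1}n_k$ be the total number of preactivations across all ReLU layers. The root branch $B_o$ has at most $M$ unstable preactivations, and the above monotonicity argument shows that any root-to-leaf path in the binary search tree has length at most $M$. Consequently the tree has at most $2^{M+1}-1$ nodes. Each iteration of \babprob either terminates or removes one node from \nodelist and inserts two new ones, so after at most $2^{M+1}$ iterations either the certification check has fired or \nodelist consists entirely of branches with no unstable preactivations, in which case Propositions~\ref{prop:tight_prob_bounds} and~\ref{prop:global_prob_bound} give $P_\ell=P_u=\prob{f(\X)>0}$ and the check fires on the next iteration.

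The only real subtlety is guaranteeing that the algorithm never gets stuck popping the same kind of branch forever; this is precisely where Proposition~\ref{prop:tight_prob_bounds} is indispensable, since it turns the \emph{semantic} fact ``gap is zero'' into the \emph{syntactic} fact ``no unstable preactivations,'' which is what the counting argument consumes. Everything else is a routine finite-tree argument.
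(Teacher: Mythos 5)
Your proof is correct and takes essentially the same route as the paper's: finitely many preactivations, each split permanently stabilizes one, and Proposition~\ref{prop:tight_prob_bounds} together with the termination check close the argument. The paper packages this as a short proof by contradiction, whereas you make the underlying counting bound on the branch tree explicit --- a presentational difference only.
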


\begin{corollary}
    \babprob is both sound and complete.
\label{cor:sound_and_complete}
\end{corollary}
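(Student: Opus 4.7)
\textbf{Proof plan for Corollary~\ref{cor:sound_and_complete}.} The plan is to obtain both properties as an almost immediate combination of Propositions~\ref{prop:global_prob_bound} and~\ref{prop:finite_time_termination}, with a short case analysis on the algorithm's termination condition. Recall that at each iteration \babprob aggregates the per-branch bounds into global bounds $P_\ell$ and $P_u$, and terminates by declaring \true when $P_\ell \ge \eta$ and \false when $P_u < \eta$.

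First I would establish soundness. Suppose \babprob outputs \true at some iteration. By the termination rule this means $P_\ell \ge \eta$, and by Proposition~\ref{prop:global_prob_bound} we have $P_\ell \le \prob{f(\X) > 0}$, so $\prob{f(\X) > 0} \ge \eta$ and Problem~\ref{problem:our_probabilistic_verification} is indeed \true. This is the entire content of the soundness direction; for symmetry I would also note that if \babprob outputs \false then $P_u < \eta$, and $\prob{f(\X) > 0} \le P_u < \eta$ by Proposition~\ref{prop:global_prob_bound}, so the \false answer is correct as well.

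Next I would establish completeness. Assume $\prob{f(\X) > 0} \ge \eta$. By Proposition~\ref{prop:global_prob_bound}, $P_u \ge \prob{f(\X) > 0} \ge \eta$ at every iteration, so the condition $P_u < \eta$ triggering a \false output is never met. By Proposition~\ref{prop:finite_time_termination}, \babprob nevertheless terminates in finitely many iterations, and termination only occurs via the \true or \false branches. Since \false is excluded, the algorithm must declare \true, establishing completeness.

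The only real subtlety, which I would address briefly in passing, is ensuring that the termination rule I use here matches the one implicitly assumed by Proposition~\ref{prop:finite_time_termination}: namely that \babprob exits exactly when the current global bounds certify one side of the inequality $P_\ell \ge \eta$ or $P_u < \eta$. Given the algorithm description in Section~\ref{sec:framwork}, together with Proposition~\ref{prop:tight_prob_bounds} (which guarantees that once all branches in \nodelist have no unstable preactivations the global bounds collapse to $\prob{f(\X) > 0}$), this is straightforward, so I do not expect any significant obstacle — the corollary really is just a one-paragraph bookkeeping consequence of the earlier propositions.
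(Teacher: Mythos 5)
Your argument is correct and is exactly the route the paper takes: its proof of Corollary~\ref{cor:sound_and_complete} simply states that the result "follows directly from Proposition~\ref{prop:global_prob_bound} and Proposition~\ref{prop:finite_time_termination}," and your write-up just makes that derivation explicit (soundness from the validity of the global bounds, completeness from finite termination plus the impossibility of a \false declaration when the problem is \true). No gaps.
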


\section{Experiments}
This section evaluates \babprob as well as other state-of-the-art probabilistic verifiers for \textit{local probabilistic robustness}: for a correctly classified input $\mathbf{x}_0$, we add zero-mean Gaussian noise of small covariance and ask whether the network preserves the desired margin with probability at least \probSatis.
We compare the two versions of \babprob---\babprobvanilla and {\textsf{BaB+BaBSR-prob}}---against \proven~(\cite{weng2019proven}), \pv~(\cite{boetius2025solvingprobabilisticverificationproblems}), and an SDP-based verifier~(\cite{fazlyab2019probabilistic}), denoted as \sdp.
Because PROVEN’s code is not public, we re-implemented it for consistency across baselines.
We first check the soundness and completeness of the algorithms by conducting experiments on a toy MLP and a toy CNN.
Results show that the verification results of both versions of \babprob and \pv match the ``ground truth,'' indicating that they are both sound and complete.
\proven and \sdp are shown incomplete, which is as expected.
Details are provided in Appendix~\ref{appendix:toy_models}.
We then report results on (i) untrained MLP and CNN models across widths and depths, (ii) MNIST~(\cite{lecun1998mnist}) and CIFAR-10~(\cite{krizhevsky2009learning}) models, respectively, and (iii) VNN-COMP 2025 benchmarks\footnote{\url{https://github.com/VNN-COMP/vnncomp2025_benchmarks}}.
Unless stated otherwise, the desired probability threshold $\probSatis=0.95$, the nonnegative threshold for \babsrprob is 0.01, and the per-instance time limit is 120 seconds.
Computation is parallelized on GPU.
Experiments were run on Ubuntu 22.04 with an Intel i9-14900K CPU, 64 GB RAM, and an RTX 4090 GPU, using a single CPU thread.

\textbf{Probability estimation and confidence.\em}
In practice, analytically computing the probabilities in Equation~(\ref{eq:compute_branch_prob_bounds}) can be time-consuming when the underlying Gaussian distribution is high-dimensional.
While dimensionality-reduction techniques could accelerate this computation, we approximate the probabilities using Monte Carlo sampling and certify the final decision with a statistical confidence guarantee. 
Across nearly all problems, our certificates achieve confidence exceeding $1 - 10^{-4}$. 
For completeness, Appendix~\ref{appendix:confidence_level} details our confidence calculation and empirical confidence results for our approach.

\textbf{Success-rate definition.\em} For sound-and-complete methods, \pv, \babprobvanilla and \textsf{BaB+\ BaBSR-prob}, “success rate” is the fraction of instances declared within the time limit. 
For \proven and \sdp, we count a verification as successful only if it matches the declaration of any of \pv or our two versions; in our runs there was no case where all three (\pv and our two versions of \babprob) failed to declare while \proven or \sdp succeeded.

In our experiments, \sdp either exhausted memory or failed to solve the problems within the time limit.
Therefore, we omit its results from the main text and report them in the appendix.
Additional details about the experiments are provided in Appendix~\ref{appendix:experiments}.
Our code is available at \url{https://github.com/FangjiW/BaB-prob}.

\subsection{Untrained MLP and CNN models}
\begin{figure}[t!]
    \centering
    \includegraphics[width=0.8\textwidth]{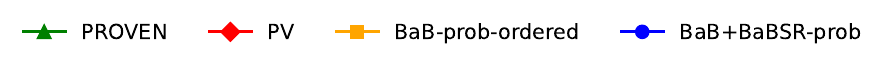}
    
    \subfloat[\scriptsize(MLP) $D_h=256, N_h=4$.]{
    \label{untrained_mlp_inputs}
    \includegraphics[width=0.3\textwidth]{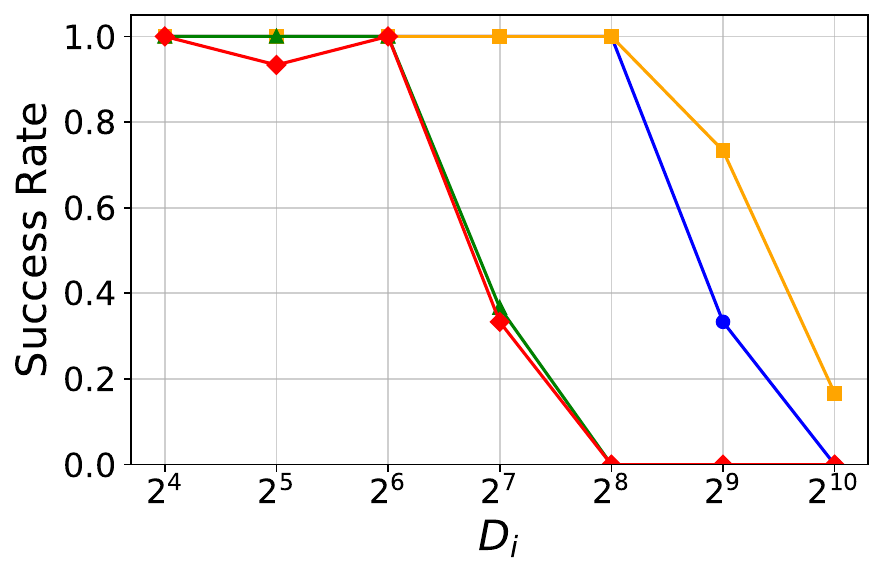}
    }
    \hfill
    \subfloat[\scriptsize(MLP) $D_i=256, N_h=4$.]{
    \label{untrained_mlp_hiddens}
    \includegraphics[width=0.3\textwidth]{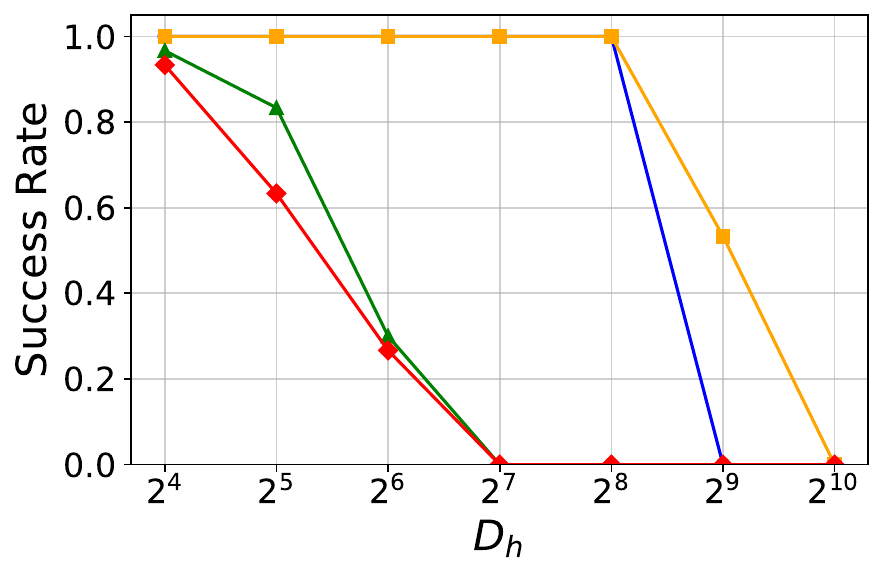}
    }
    \hfill
    \subfloat[\scriptsize(MLP) $D_i=256, D_h=256$.]{
    \label{untrained_mlp_layers}
    \includegraphics[width=0.3\textwidth]{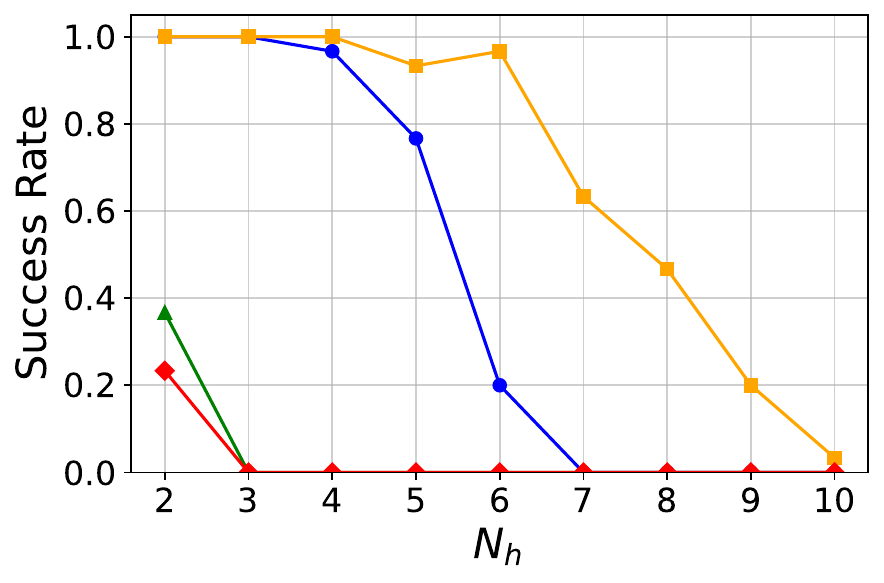}
    }
    
    \subfloat[\scriptsize(CNN) $C_h=32, N_h=3$.]{
    \includegraphics[width=0.3\textwidth]{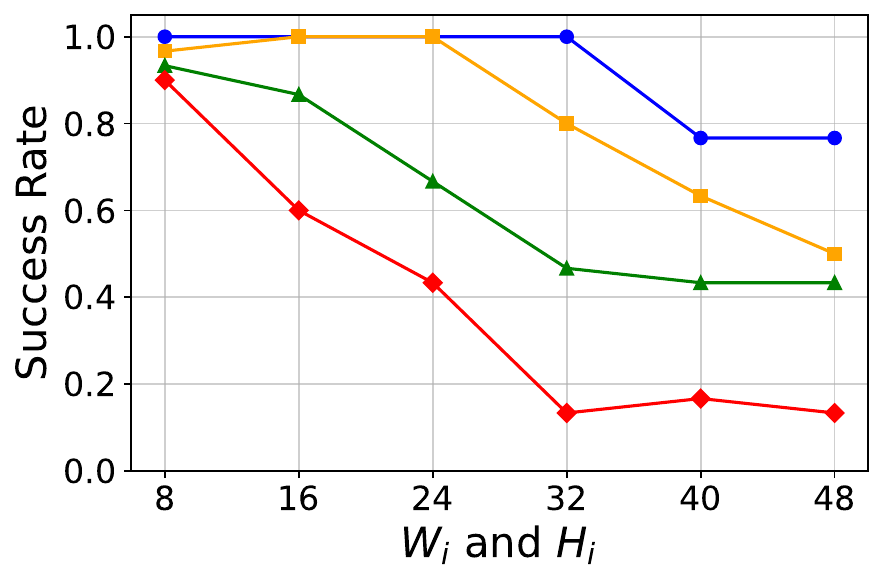}
    }
    \hfill
    \subfloat[\scriptsize(CNN) $W_i=H_i=32, N_h=3$.]{\includegraphics[width=0.3\textwidth]{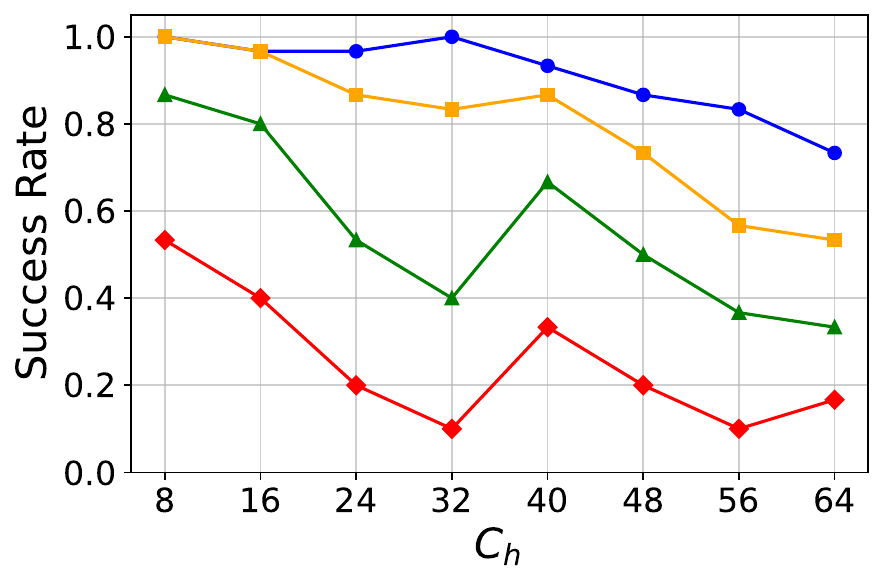}
    }
    \hfill
    \subfloat[\scriptsize(CNN) $W_i=H_i=32, C_h=32$.]{\includegraphics[width=0.3\textwidth]{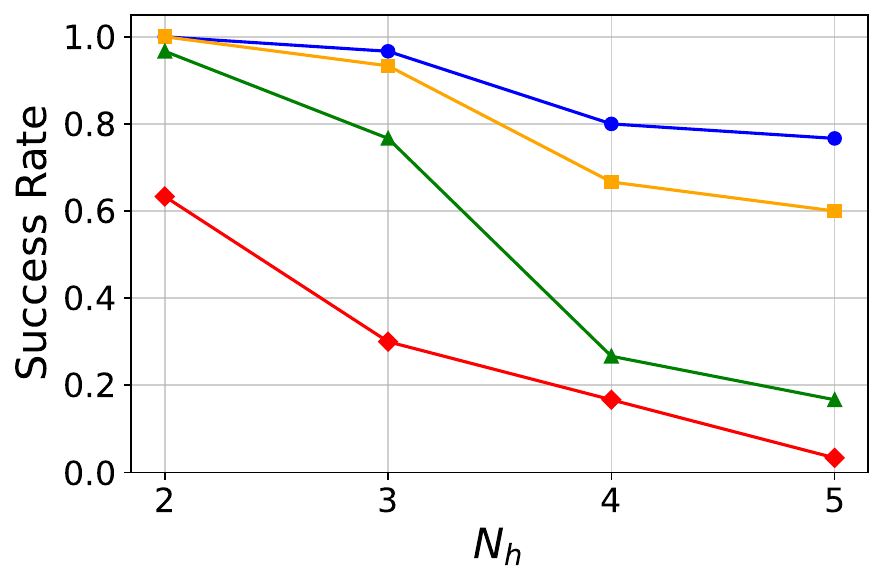}
    }
    \caption{Success rate results for untrained models. 
    }
    \label{fig:untrained_models}
\end{figure}

We evaluate the scalability of different approaches with respect to network size on untrained MLPs and CNNs.
For MLPs, we vary the input dimension $D_i$, hidden dimension $D_h$, and the number of hidden layers $N_h$.
For CNNs, we vary the input shape $(1,W_i,H_i)$ with $W_i=H_i$, the number of hidden channels $C_h$, and the number of hidden layers $N_h$.
Figure~\ref{fig:untrained_models} shows the success rate results for the various approaches.
The results show a clear advantage for both versions of \babprob, which consistently achieved higher success rates than \proven and \pv. 
On MLP models, \babprobvanilla shows better scalability than \babprobbabsrprob, whereas on CNN models, \babprobbabsrprob scales better than \babprobvanilla.

\subsection{MNIST and CIFAR-10 models}
\label{experiment:mnist_cifar_models}
\begin{table}[t!]
    \centering
    \begin{threeparttable}
    \resizebox{\textwidth}{!}{
    \begin{tabular}{l|c|c|c|c}
        \toprule
        \multicolumn{5}{c}{MNIST models}\\
        \midrule
         & \proven & \pv & \babprobvanilla (ours) & \babprobbabsrprob (ours)\\
        \midrule
        MLP-256-2\tnote{1} & 70.00\% (0.02s) & 33.33\% (80.02s) & \textbf{100\%} (0.25s) & \textbf{100\%} (0.25s) \\
        MLP-256-6 & 23.33\% (0.02s) & 23.33\% (92.01s) & \textbf{100\%} (6.19s) & 86.67\% (34.14s) \\
        MLP-256-10 & 16.67\% (0.04s) & 16.67\% (100.01s) & 96.67\% (11.61s) & 63.33\% (51.80s) \\
        MLP-1024-2 & 66.67\% (0.01s) & 40.00\% (72.02s) & \textbf{100\%} (2.74s) & \textbf{100\%} (1.21s) \\
        MLP-1024-6 & 20.00\% (0.03s) & 10.00\% (108.01s) & \textbf{93.33\%} (26.52s) & 56.67\% (63.03s)\\
        MLP-1024-10 & 6.67\% (0.05s) & 6.67\% (112.01s) & \textbf{56.67\%} (65.01s) & 20.00\% (102.73s)\\
        Conv-8-2\tnote{2} & 96.67\% (0.03s) & 20.00\% (96.01s) & \textbf{100\%} (0.03s) & \textbf{100\%} (0.03s)\\
        Conv-16-2 & 73.33\% (0.02s) & 13.33\% (104.01s) &  96.67\% (13.33s)& \textbf{100\%} (4.33s)\\
        Conv-32-2 & 23.33\% (0.02s) & 0\% (120.03s) & 53.33\% (65.61s) & \textbf{83.33\%} (27.71s)\\
        Conv-64-2 & 36.67\% (0.02s) & 0\% (120.04s) & 46.67\% (69.87s) & \textbf{56.67\%} (61.85s) \\
        \midrule
        \multicolumn{5}{c}{CIFAR-10 models}\\
        \midrule
        MLP-256-2 & 96.67\% (0.03s) & 76.67\% (28.02s) &  \textbf{100\%} (0.01s) & \textbf{100\%} (0.02s)\\
        MLP-256-6 & 60.00\% (0.03s) & 53.33\% (56.02s) & \textbf{100\%} (0.32s) & \textbf{100\%} (0.56s) \\
        MLP-256-10 & 50.00\% (0.04s) & 36.67\% (76.02s) &  \textbf{100\%} (6.12s) & 86.67\% (27.31s) \\
        MLP-1024-2 & 93.33\% (0.02s) & 90.00\% (12.02s) &   \textbf{100\%} (0.02s) & \textbf{100\%} (0.01s)\\
        MLP-1024-6 & 66.67\% (0.03s) & 60.00\% (48.02s) &   \textbf{100\%} (3.02s) & 96.67\% (8.77s) \\
        MLP-1024-10 & 50.00\% (0.05s) & 43.33\% (68.02s) &  \textbf{100\%} (7.19s) & 83.33\% (23.30s)\\
        Conv-16-2 & 86.67\% (0.03s) & 26.67\% (88.02s) &   96.67\% (4.50s) & \textbf{100\%} (0.51s) \\
        Conv-16-3 & 33.33\% (0.03s) & 3.33\% (116.02s) &   60.00\% (56.02s) & \textbf{90.00\%} (28.40s) \\
        Conv-16-4 & 16.67\% (0.04s) & 0\% (120.02s) &   36.67\% (81.57s) & \textbf{70.00\%} (57.67s) \\
        Conv-32-2 & 86.67\% (0.03s) & 20.00\% (96.03s)   & 93.33\% (8.95s) & \textbf{96.67\%} (5.57s) \\
        Conv-32-3 & 33.33\% (0.04s) & 10.00\% (108.03s)   & 50.00\% (69.85s) & \textbf{63.33\%} (46.99s) \\
        Conv-32-4 & 3.33\% (0.05s) & 0.00\% (120.04s)   & 13.33\% (107.64s) & \textbf{33.33\%} (84.54s)\\
        \bottomrule
    \end{tabular}
    }
    \begin{tablenotes}
        \footnotesize
        \item[1] MLP-256-2 refers to the MLP model with $D_h=256$ and $N_h=2$.
        \item[2] Conv-8-2 refers to the CNN model with $C_h=8$ and $N_h=2$.
    \end{tablenotes}
    \end{threeparttable}
        \caption{
            Success rate and average time results for MNIST and CIFAR-10 models. 
        }
    \label{tab:results_for_trained_models}
\end{table}

\begin{figure}[t!]
    \centering
    \includegraphics[width=0.8\linewidth]{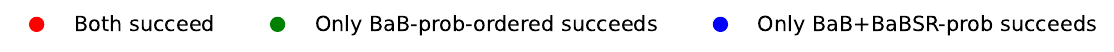}
    \subfloat[MLP models]{\includegraphics[width=0.3\linewidth]{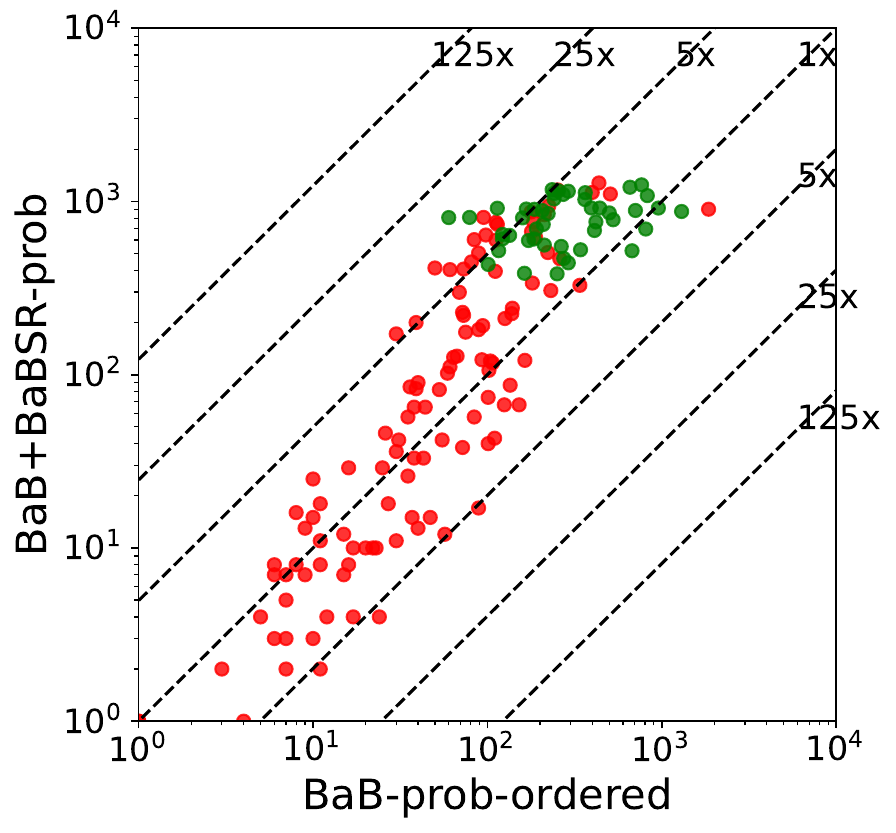}}
    \hspace{2em}
    \subfloat[CNN models]{\includegraphics[width=0.3\linewidth]{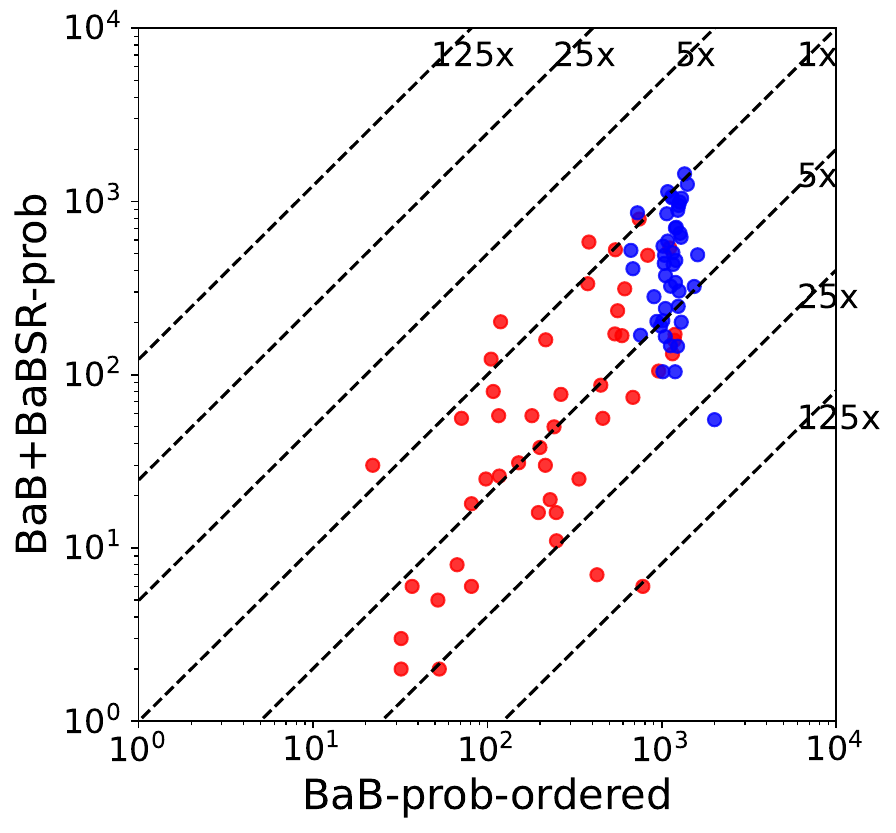}}
    \caption{Comparison of number of splits between \babprobvanilla and \babprobbabsrprob.}
    \label{fig:splits_comparison}
\end{figure}

\textbf{Success rate and average time.\em}
Table~\ref{tab:results_for_trained_models} shows the success rate and average time results for MNIST and CIFAR-10 models.
On MLPs, both versions of \babprob strictly dominate \proven and \pv.
Besides, \babprobvanilla shows higher success rates than \babprobbabsrprob.
On CNNs, both versions of \babprob consistently outperform \proven and \pv, and \babprobbabsrprob shows better performance than \babprobvanilla.

\textbf{Split efficiency.\em}
Figure~\ref{fig:splits_comparison} 
compares the number of splits required by \babprobvanilla and \babprobbabsrprob.
On MLPs, \babprobvanilla generally splits fewer preactivations than \babprobbabsrprob, particularly on the more challenging instances.
For CNNs, \babprobbabsrprob consistently requires fewer preactivation splits than \babprobvanilla by up to two orders of magnitude.

\subsection{Results for VNN-COMP 2025 models}
\begin{table}[t!]
    \centering
    \begin{threeparttable}
    \resizebox{\textwidth}{!}{
    \begin{tabular}{ccccccc}
        Dataset & Network Type
         & Input Dim & \proven & \pv & \babprobvanilla (ours) & \babprobbabsrprob (ours)\\
        \midrule
        \texttt{acasxu\_2023} & FC + ReLU & 5 & 10.22\% (0.03s) & \textbf{92.47\%} (15.09s) & 48.39\% (63.83s) & 47.85\% (65.59s) \\
        \texttt{cersyve} & FC + ReLU (Control Tasks) & 2-5 & 0\% (0.05s) & 91.67\% (19.64s)  & \textbf{100\%} (8.88s) & \textbf{100\%} (13.24s)\\
        \texttt{cifar100\_2024} & FC, Conv, Residual + ReLU, BatchNorm & 3072 & 49.00\% (0.17s) & 7.00\% (111.69s) & 52.00\% (59.78s) & \textbf{57.00\%} (57.35\%) \\
        \texttt{collins\_rul\_cnn\_2022} & Conv + ReLU, Dropout & 400-800 & 91.94\% (0.03s) & 93.55\% (9.38s) & 95.16\% (7.09s) & \textbf{96.77\%} (5.07s) \\
        \texttt{cora\_2024} & FC + ReLU & 784-3072 & 19.44\% (0.04s) & 17.22\% (99.34s) & \textbf{58.89\%} (60.99s) & 34.44\% (88.01s) \\
        \texttt{linearizenn\_2024} & FC, Residual + ReLU & 4 & 20.00\% (0.03s) & \textbf{100\%} (0.08s) & 95.00\% (21.46s) & 81.67\% (42.37s) \\
        \texttt{relusplitter} (MNIST) & FC+ReLU & 784 & 18.75\% (0.02s) & 18.75\% (97.51s)& \textbf{100\%} (1.80s) & 96.25\% (21.27s)\\
        \texttt{relusplitter} (CIFAR-10) & Conv + ReLU & 3072 & 76.67\% (0.04s) & 46.67\% (64.02s) & 80.00\% (24.71s) & \textbf{93.33\%} (11.90s)\\ 
        \texttt{safenlp\_2024} & FC + ReLU & 30 & 35.32\% (0.01s) & 41.50\% (73.82s) & 99.07\% (3.18s) & \textbf{99.79\%} (0.98s)\\
        \bottomrule
    \end{tabular}
    }
    \end{threeparttable}
    \caption{Results for VNN-COMP 2025 benchmarks.} 
    \label{tab:experiments_vnn}
\end{table}
Table 2 reports the results for the VNN-COMP 2025 benchmarks. Both versions of BaB-prob achieve higher success rates than PROVEN. Moreover, they consistently outperform PV, except on the low-dimensional datasets—\texttt{acasxu\_2023} (5 dimensions) and \texttt{linearizenn\_2024} (4 dimensions).

\section{Conclusions}
In this work, we have introduced $\babprob$, the first BaB framework with preactivation splitting for probabilistic verification of neural networks.
\babprob iteratively divides the original problems into subproblems by splitting preactivations and leverages \lirpa to bound the probability for each subproblem.
We prove soundness and completeness of our approach for ReLU networks, which can be extended to piecewise-linear activation functions.
Furthermore, we introduce the notion of uncertainty level and propose two versions of \babprob with different strategies developed by uncertainty level.
Extensive experiments show that our approach significantly outperforms state-of-the-art approaches in medium- to high-dimensional input problems.

\bibliographystyle{iclr2026_conference}
\bibliography{references}

\newpage
\appendix
\section{Detailed Framework of \babprob}
\label{appendix:framework}
\begin{algorithm}[t]
    \renewcommand{\algorithmicrequire}{\textbf{Input:}}
    \renewcommand{\algorithmicensure}{\textbf{Output:}}
    \caption{\babprob}  \label{code:bab-prob}
    \begin{algorithmic}[1]
    \Require $f(\x)$, $\distribution$, $\inputset$, $\eta$
    \State $\nodelist\gets \listt{\ }$ \label{babprob:create_nodelist}
    \State $\underline{f}(\x), \bar f(\x), \underline \preactivation_o^{[N-1]}(\x), \bar \preactivation_o^{[N-1]}(\x)  \gets \mathit{\callLirpa}(f,\inputset,\varnothing)$ \label{bab_prob:call_lirap_intialization}
    \State $p_{\ell,o}, p_{u,o}\gets \boundBranchProb(\distribution, \underline{f}(\x), \bar f(\x), \underline \preactivation_o^{[N-1]}(\x), \bar \preactivation_o^{[N-1]}(\x), \varnothing)$ \label{babprob:bound_branch_prob_init}
    \State $B_o\gets \langle p_{\ell,o},p_{u,o},\varnothing\rangle$ \label{babprob:create_branch_init}
    \If{$p_{\ell,o}<p_{u,o}$}\label{babprob:mark_condition_init}
    \State $\markPreactivationToSplit(B_o)$\label{babprob:mark_preactivation_to_split_init}
    \EndIf
    \State $\nodelist.insert(B_o)$\label{babprob:insert_init}
    \While{\textbf{True}}
    \State $P_\ell,P_u\gets \boundGlobalProb(\nodelist)$ \label{bab_prob:global_prob_bound}
    \If{$P_\ell\geq \eta$} \label{babprob:begin_check_termination_condition}
    \Return \true
    \EndIf
    \If {$P_u < \eta$}
    \Return \false
    \EndIf \label{babprob:end_check_termination_condition}
    \State $B = \langle p_\ell, p_u, \setOfConstraints\rangle \gets \nodelist.pop()$ \label{babprob:pop_branch}
    \State $\big\{y_j^{(k)}\ge 0, y_j^{(k)}< 0\big\}\gets \generateNewConstraints(B)$ \label{babprob:generate_new_constraints}
    \For{$c_{i}\in \big\{y_j^{(k)} \ge 0, y_j^{(k)}< 0\big\}$} \label{babprob:begin_bound_new_branch}
    \State $\setOfConstraints_i\gets \setOfConstraints \cup \{c_i\}$ \label{babprob:define_new_set_of_constraints}
    \State $\underline{f}_i(\x), \bar f_i(\x), \underline \preactivation_i^{[N-1]}(\x), \bar \preactivation_i^{[N-1]}(\x) \gets \mathit{\callLirpa}(f, \inputset, \setOfConstraints_i)$ \label{babprob:call_lirpa}
    \State $p_{\ell,i},p_{u,i}\gets \boundBranchProb(\distribution, \underline{f}_i(\x), \bar f_i(\x), \underline \preactivation_i^{[N-1]}(\x), \bar \preactivation_i^{[N-1]}(\x), \setOfConstraints_i)$ \label{babprob:bound_branch_prob}
    \State $B_i\gets \langle p_{l,i},p_{u,i},\setOfConstraints_i\rangle$ \label{babprob:create_branch}
    \If{$p_{l,i}<p_{u,i}$} \label{babprob:mark_condition}
    \State $\markPreactivationToSplit(B_i)$\label{babprob:mark_preactivation_to_split}
    \EndIf
    \State $\nodelist.insert(B_i)$\label{babprob:insert_branch}
    \EndFor
    \EndWhile
    \end{algorithmic}
\end{algorithm}

The pseudocode of \babprob is shown in Algorithm~\ref{code:bab-prob}.
During initialization, \nodelist is created to maintain all candidate branches (Line~\ref{babprob:create_nodelist}).
\babprob applies linear bound propagation over \inputset \textbf{under no constraint} to compute linear bounds for $f$ and $\preactivation^{[N-1]}$, yielding $\underline{f}_o(\x)$, $\bar f_o(\x)$ and $\underline \preactivation_o^{[N-1]}(\x)$, $\bar \preactivation_o^{[N-1]}(\x)$, such that 
\begin{subequations}
\begin{align}
    \underline f_o(\x) 
        &\le f(\x) \le \bar f_o(\x), 
        && \forall \x\in \inputset, \\
    \underline{\preactivation}_i^{(k)}(\x) 
        &\le \preactivation^{(k)}(\x) \le \bar\preactivation_o^{(k)}(\x), 
        && \forall \x \in \inputset,~k\in[N-1],
\end{align}
\end{subequations}
(Line~\ref{bab_prob:call_lirap_intialization}).
\babprob then applies Equation~(\ref{eq:compute_branch_prob_bounds}) to compute the probability bounds for $\prob{f(\X)>0}$, obtaining the lower bound $p_{\ell,o}$ and the upper bound $p_{u,o}$, and creates the root branch $B_o=\langle p_{\ell,o},p_{u,o},\varnothing\rangle$ (Line~\ref{babprob:bound_branch_prob_init}-\ref{babprob:create_branch_init}).
Before inserting $B_o$ into $\nodelist$, the preactivation in $B_o$ to be split on is first identified (though not split immediately) (Line~\ref{babprob:mark_condition_init}-\ref{babprob:mark_preactivation_to_split_init}).
This strategy reduces memory usage for \babprobbabsrprob, since the method relies on the linear bounds to choose the preactivation to split on.
By marking the preactivation at this stage, we can discard the linear bound information before inserting $B_o$ into $\mathcal{B}$.
Besides, $\babprob$ only performs the identification if $(p_u-p_\ell)$ is positive.
At the end of initialization, $B_o$ is inserted into \nodelist (Line~\ref{babprob:insert_init}).

At each iteration, \babprob first computes the global probability bounds for $\prob{f(\X)>0}$:
\begin{equation}
    P_\ell=\sum_{\langle p_\ell,p_u, \setOfConstraints\rangle \in \nodelist} p_\ell,\quad P_u=\sum_{\langle p_\ell,p_u, \setOfConstraints\rangle \in \nodelist} p_u,
\end{equation}
(Line~\ref{bab_prob:global_prob_bound}).
If the current global probability bounds are already enough to make a certification, \babprob terminates and make the corresponding certification (Line~\ref{babprob:begin_check_termination_condition}-\ref{babprob:end_check_termination_condition}).
Otherwise, \babprob pops the branch  $B=\langle p_\ell,p_u,\setOfConstraints\rangle$ with the largest $(p_u-p_\ell)$ from \nodelist (Line~\ref{babprob:pop_branch}) and generates the two new constraints on the identified preactivation (Line~\ref{babprob:generate_new_constraints}).
For each new constraint $c_i$, \babprob generates a new set of constraints $\setOfConstraints_i=\setOfConstraints \cup \{c_i\}$ (Line~\ref{babprob:begin_bound_new_branch}-\ref{babprob:define_new_set_of_constraints}).
\babprob then applies linear bounds propagation over \inputset \textbf{under the constraints in $\setOfConstraints_i$} to compute linear bounds for $f$ and $\preactivation^{[N-1]}$, yielding $\underline{f}_i(\x)$, $\bar f_i(\x)$ and $\underline{\preactivation}_i^{[N-1]}(\x)$, $\bar \preactivation_i^{[N-1]}$, such that
\begin{subequations}
\begin{align}
    \underline f_i(\x) 
        &\le f(\x) \le \bar f_i(\x), 
        && \forall \x\in \inputset,~ \setOfConstraints_i \text{ satisfied}, \\
    \underline{\preactivation}_i^{(k)}(\x) 
        &\le \preactivation^{(k)}(\x) \le \bar\preactivation_i^{(k)}(\x), 
        && \forall \x \in \inputset,~ \setOfConstraints_i^{[k-1]} \text{ satisfied},~k\in[N-1],
\end{align}
\end{subequations}
(Line~\ref{babprob:call_lirpa}).
Then, it uses Equation~(\ref{eq:compute_branch_prob_bounds}) to compute the probability bounds for $\prob{f(\X)>0,\, \setOfConstraints_i}$, obtaining $p_{\ell,i}$ and $p_{u,i}$.
Same as the initialization, \babprob identifies the preactivation to split on for $B_i=\langle p_{\ell,i},p_{u,i},\setOfConstraints_i\rangle$ in advance (Line~\ref{babprob:mark_condition}-\ref{babprob:mark_preactivation_to_split}) and then inserts
 $B_i$ into \nodelist.

\section{Proof for Theoretical Results}
\label{appendix:proof}

\begin{lemma}
\label{lemma:core}
Let $\setOfConstraints=\big\{ y_{j_\ell}^{k_\ell}\ge 0, \ell \in [s],\,y_{j_\ell}^{k_\ell}, \ell\in [s+1,t] \big\}$.
Assume $\setOfConstraints$ can be decomposed by 
\begin{equation}
    \setOfConstraints^{(k)}=\set{y^{(k)}_{j_{k,\ell}}\ge 0, \ell\in[s_k],\,y^{(k)}_{j_{k,\ell}}< 0, \ell\in[s_k+1,t_k]}, \quad k\in[N-1].
\end{equation}
Let and $\underline{f}(\x)$, $\bar f(\x)$,  $\underline{\preactivation}^{[N-1]}(\x)$, $\bar \preactivation^{[N-1]}(\x)$ be the linear bounds for $f$ and $\preactivation^{[N-1]}$ obtained by linear bound propagation under the constraints of $\setOfConstraints$.
For $k\in[N-1]$, let
\begin{equation}
    \begin{aligned}
        &\setOfConstraints_x^{(k)}=\set{\x\in\inputset:y^{(k)}_{j_{k,\ell}}(\x)\ge 0, \ell\in[s_k],\,y^{(k)}_{j_{k,\ell}}(\x)< 0, \ell\in[s_k+1,t_k]},\\
        &\underline{\setOfConstraints}_x^{(k)}=\set{\x\in\inputset:\underline y^{(k)}_{j_{k,\ell}}(\x)\ge 0, \ell\in[s_k],\,\bar y^{(k)}_{j_{k,\ell}}(\x)< 0, \ell\in[s_k+1,t_k]},\\
        &\bar{\setOfConstraints}_x^{(k)}=\set{\x\in\inputset:\bar y^{(k)}_{j_{k,\ell}}(\x)\ge 0, \ell\in[s_k],\,\underline y^{(k)}_{j_{k,\ell}}(\x)< 0, \ell\in[s_k+1,t_k]}.
    \end{aligned}    
\end{equation}
Then, for all $k\in[N-1]$
\begin{equation}
    \bigcap_{r=1}^k \underline{\setOfConstraints}_x^{(r)} \subseteq \bigcap_{r=1}^k \setOfConstraints_x^{(r)} \subseteq \bigcap_{r=1}^k \bar{\setOfConstraints}_x^{(r)},
\label{eq:core+lemma_without_f}
\end{equation}
and,
\begin{align}
    &\Bigl(\bigcap_{r=1}^{N-1}  \underline{\setOfConstraints}_x^{(r)}\Bigr)\cap 
      \{\x\in\inputset:\underline f(\x)>0\} \nonumber\\
    \subseteq &\Bigl(\bigcap_{r=1}^{N-1} \setOfConstraints_x^{(r)}\Bigr)\cap 
      \{\x\in\inputset: f(\x)>0\} \nonumber\\
    \subseteq &\Bigl(\bigcap_{r=1}^{N-1}  \bar{\setOfConstraints}_x^{(r)}\Bigr)\cap 
      \{\x\in\inputset:\bar f(\x)>0\}.
\label{eq:core+lemma_with_f}
\end{align}
\end{lemma}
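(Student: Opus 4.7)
The plan is to prove the chain of inclusions in Equation~(\ref{eq:core+lemma_without_f}) by induction on $k$, and then deduce Equation~(\ref{eq:core+lemma_with_f}) by repeating essentially the same argument with $f$ in place of one more preactivation layer. The key observation driving the proof is that the linear bounds obtained from linear bound propagation, Equation~(\ref{eq:linear_bounds_for_preactivation}), are only guaranteed to be valid bounds on $\preactivation^{(k)}(\x)$ when the constraints $\setOfConstraints^{[k-1]}$ are genuinely satisfied at $\x$ (not merely implied by the bounds themselves). This is why the induction is essential: before invoking the bound at layer $k+1$, we must first know that the true preactivations satisfy $\setOfConstraints^{[k]}$.

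\textbf{Base case.}
For $k=1$, there are no earlier constraints, so Equation~(\ref{eq:linear_bounds_for_preactivation}) gives $\underline{\preactivation}^{(1)}(\x)\le \preactivation^{(1)}(\x)\le \bar\preactivation^{(1)}(\x)$ for every $\x\in\inputset$ unconditionally. Given any $\x\in\underline{\setOfConstraints}_x^{(1)}$, the bound $\underline y^{(1)}_{j_{1,\ell}}(\x)\le y^{(1)}_{j_{1,\ell}}(\x)$ for $\ell\in[s_1]$ and $y^{(1)}_{j_{1,\ell}}(\x)\le \bar y^{(1)}_{j_{1,\ell}}(\x)$ for $\ell\in[s_1+1,t_1]$ immediately transfers each inequality to $\preactivation^{(1)}(\x)$, placing $\x$ in $\setOfConstraints_x^{(1)}$. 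The symmetric argument shows $\setOfConstraints_x^{(1)}\subseteq \bar{\setOfConstraints}_x^{(1)}$.

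\textbf{Inductive step.}
Assume the chain holds up to $k$. Take $\x\in\bigcap_{r=1}^{k+1}\underline{\setOfConstraints}_x^{(r)}$. The inductive hypothesis gives $\x\in\bigcap_{r=1}^{k}\setOfConstraints_x^{(r)}$, i.e., $\setOfConstraints^{[k]}$ is satisfied at $\x$; therefore Equation~(\ref{eq:linear_bounds_for_preactivation}) applied at layer $k+1$ yields $\underline{\preactivation}^{(k+1)}(\x)\le\preactivation^{(k+1)}(\x)\le\bar\preactivation^{(k+1)}(\x)$. The same entry-wise transfer as in the base case places $\x$ in $\setOfConstraints_x^{(k+1)}$, proving the left inclusion. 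For the right inclusion, take $\x\in\bigcap_{r=1}^{k+1}\setOfConstraints_x^{(r)}$. Since $\setOfConstraints^{[k]}$ is satisfied at $\x$, the same bound at layer $k+1$ applies, and the inequalities $y^{(k+1)}_{j_{k+1,\ell}}(\x)\ge 0$ or $<0$ transfer to $\bar y^{(k+1)}_{j_{k+1,\ell}}(\x)\ge 0$ or $\underline y^{(k+1)}_{j_{k+1,\ell}}(\x)<0$, respectively; combining with the inductive hypothesis completes the chain.

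\textbf{Adding $f$.}
Applying Equation~(\ref{eq:core+lemma_without_f}) with $k=N-1$ yields the preactivation parts of Equation~(\ref{eq:core+lemma_with_f}). For the $f$ part, fix $\x$ in the leftmost set. By the already-proved inclusion, $\x$ satisfies $\setOfConstraints$, so Equation~(\ref{eq:linear_bounds_for_f}) gives $f(\x)\ge \underline{f}(\x)>0$, placing $\x$ in the middle set. Symmetrically, any $\x$ in the middle set satisfies $\setOfConstraints$, hence $\bar f(\x)\ge f(\x)>0$, placing $\x$ in the rightmost set.

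\textbf{Main obstacle.}
The only nontrivial point is the conditional validity of Equation~(\ref{eq:linear_bounds_for_preactivation}): the bounds at layer $k+1$ are not automatically valid on all of $\inputset$, only on the subset where $\setOfConstraints^{[k]}$ actually holds. The inductive structure is precisely what converts the a priori weaker hypothesis ``$\x$ lies in the linear-bound set $\underline{\setOfConstraints}_x^{(r)}$'' into the stronger conclusion ``$\x$ satisfies the true constraints $\setOfConstraints^{(r)}$'' at each intermediate layer, so that the bound can be applied at the next layer. Once this is handled, the rest is routine entry-wise comparison.
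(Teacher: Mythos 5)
Your proof is correct and follows essentially the same route as the paper's: induction on the layer index for Equation~(\ref{eq:core+lemma_without_f}), using the inductive hypothesis to establish that the true constraints $\setOfConstraints^{[k]}$ hold before invoking the conditionally valid bounds of Equation~(\ref{eq:linear_bounds_for_preactivation}) at the next layer, and then one more application of the same transfer with Equation~(\ref{eq:linear_bounds_for_f}) to obtain Equation~(\ref{eq:core+lemma_with_f}). You also correctly isolate the one subtle point---that the linear bounds at layer $k+1$ are only guaranteed on the subset where the earlier constraints genuinely hold---which is exactly what the paper's induction is designed to handle.
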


\begin{proof}
\label{proof:lemma}
We first prove Equation~(\ref{eq:core+lemma_without_f}) by induction.

By Equation~(\ref{eq:linear_bounds_for_preactivation}), 
\begin{equation}
    \underline \preactivation^{(1)}(\x)\le \preactivation^{(1)}(\x)\le \bar \preactivation^{(1)}(\x),\quad \forall\x\in\inputset.
\end{equation}
Therefore, 
$\underline{\setOfConstraints}_x^{(1)} \subseteq \setOfConstraints_x^{(1)} \subseteq \bar{\setOfConstraints}_x^{(1)}$, implying that Equation~(\ref{eq:core+lemma_without_f}) holds for $k=1$.

Assume Equation~(\ref{eq:core+lemma_without_f}) holds for $k-1$, we will prove Equation~(\ref{eq:core+lemma_without_f}) for $k$.

$\forall \x\in\bigcap_{r=1}^{k} \underline{\setOfConstraints}_x^{(r)} \subseteq \bigcap_{r=1}^{k-1} \underline{\setOfConstraints}_x^{(r)}$, Equation~(\ref{eq:core+lemma_without_f}) holding for $k-1$ implies that $\x\in \bigcap_{r=1}^{k-1} \setOfConstraints_x^{(r)}$, i.e., $\setOfConstraints^{[k-1]}$ are satisfied.
By Equation~(\ref{eq:linear_bounds_for_preactivation}),
\begin{equation}
    \underline\preactivation^{(k)}(\x)\le \preactivation^{(k)}(\x)\le \bar\preactivation^{(k)}(\x).
    \label{eq:proof_of_lemma_bounds_1}
\end{equation}
By Equation~(\ref{eq:proof_of_lemma_bounds_1}),
and since $\x\in \underline{\setOfConstraints}_x^{(k)}$, it holds that $\x\in {\setOfConstraints}_x^{(k)}$.
So we have $\x\in \bigcap_{r=1}^{k} \setOfConstraints_x^{(r)}$, implying that $\bigcap_{r=1}^k \underline{\setOfConstraints}_x^{(r)} \subseteq \bigcap_{r=1}^k \setOfConstraints_x^{(r)}$.
For the second inequality, $\forall \x \in \bigcap_{r=1}^k \setOfConstraints_x^{(r)}$, Equation~(\ref{eq:proof_of_lemma_bounds_1}) also holds.
Then, also, $\x\in\bar\setOfConstraints_x^{(k)}$.
So $\x\in \bigcap_{r=1}^k \bar\setOfConstraints_x^{(r)}$, implying that $\bigcap_{r=1}^k \setOfConstraints_x^{(r)} \subseteq \bigcap_{r=1}^k \bar{\setOfConstraints}_x^{(r)}$.
Thus, Equation~(\ref{eq:core+lemma_without_f}) holds for $k$.
Therefore, Equation~(\ref{eq:core+lemma_without_f}) holds for all $k\in[N-1]$.

Using Equation~(\ref{eq:core+lemma_without_f}) for $k=N-1$, we can prove Equation~(\ref{eq:core+lemma_with_f}) similar to how we prove Equation~(\ref{eq:core+lemma_without_f}) from $k-1$ to $k$.
\end{proof}

\begin{proof}[Proof of Proposition~\ref{prop:branch_prob_bounds}]
\label{proof:branch_prob_bounds}
By Lemma~\ref{lemma:core} (Equation~(\ref{eq:core+lemma_with_f})), 
\begin{align}
\label{eq:proof_of_prop1}
    \prob{f(\X)>0,\,\setOfConstraints}&\ge \prob{\underline f(\X)>0,\underline y_{j_\ell}^{k_\ell}(\X)\ge0,\ell\in[s],\,\bar y_{j_\ell}^{k_\ell}(\X)<0,\ell\in[s+1,t]}
    \nonumber\\
    &=\prob{\underline{\mathbf{P}}\X+\underline{\mathbf{q}}}.
\end{align}
Similarly, $\prob{f(\X)>0,\,\setOfConstraints}\le \prob{\bar{\mathbf{P}}\X+\bar{\mathbf{q}}}$.
\end{proof}

\begin{proof}[Proof of Proposition~\ref{prop:branch_prob_gap}]
We assume the constraint on $y_{j^\star}^{(k^\star)}$ in \setOfConstraints is $y_{j^\star}^{(k^\star)}\ge 0$, the other case can be proved similarly.
Assume that for $k\in [N-1],\, k\neq k^\star$,
\begin{equation}
    \setOfConstraints^{(k)}=\set{y^{(k)}_{j_{k,\ell}}\ge 0,\, \ell \in [s_k],\; y^{(k)}_{j_{k,\ell}}<0,\, \ell \in [s_k+1,t_k]},
\end{equation} and
\begin{equation}
    \setOfConstraints^{(k^\star)}= \set{y^{(k^\star)}_{j_{k^\star,\ell}}\ge 0,\, \ell \in [s_{k^\star}],\; y^{(k^\star)}_{j_{k^\star,\ell}}<0,\,\ell \in [s_{k'}+1,t_{k^\star}],\;y_{j^\star}^{(k^\star)}\ge 0}.
\end{equation}
By Proposition~\ref{prop:branch_prob_bounds},
\begin{equation}
 p_\ell = \prob{\begin{aligned}
    &\underline{f}(\X)> 0,\\
    &\begin{aligned}
        &\underline y^{(k)}_{j_{k,\ell}}(\X)\ge 0,\, \ell\in [s_k], && k\in [N-1],\\
        &\bar y^{(k)}_{j_{k,\ell}}(\X)< 0,\, \ell\in [s_k+1,t_k], && k\in [N-1]
    \end{aligned}
\end{aligned}}.
\label{eq:proof_of_prop2_1}
\end{equation}
Applying Lemma~\ref{lemma:core} (Equation~(\ref{eq:core+lemma_without_f})) with $k=N-1$, 
\begin{equation}
    \bigcap_{r=1}^{N-1}\underline\setOfConstraints_x^{(r)}\subseteq \bigcap_{r=1}^{N-1}\setOfConstraints_x^{(r)},
\label{eq:proof_of_prop2_same_argument_begin}
\end{equation}
therefore, 
\begin{align}
    &\prob{\begin{aligned}
    &\underline{f}(\X)> 0,\\
    &\begin{aligned}
        &\underline y^{(k)}_{j_{k,\ell}}(\X)\ge 0,\, \ell\in [s_k], && k\in [N-1],\\
        &\bar y^{(k)}_{j_{k,\ell}}(\X)< 0,\, \ell\in [s_k+1,t_k], && k\in [N-1]
    \end{aligned}
    \end{aligned}}
    \nonumber\\
    =\,&\prob{
    \begin{aligned}
        &\underline{f}(\X)> 0,\\
        &\underline y^{(k)}_{j_{k,\ell}}(\X)\ge 0,\, \ell\in [s_k], && k\in [N-1],\\
        &\bar y^{(k)}_{j_{k,\ell}}(\X)< 0,\, \ell\in [s_k+1,t_k], && k\in [N-1],\\
        &\setOfConstraints^{[N-1]}
    \end{aligned}}
    \nonumber\\
    =\,&\prob{\setOfConstraints^{[N-1]}}\prob{
    \begin{aligned}
        &\underline{f}(\X)> 0,\\
        &\underline y^{(k)}_{j_{k,\ell}}(\X)\ge 0,\, \ell\in [s_k], && k\in [N-1],\\
        &\bar y^{(k)}_{j_{k,\ell}}(\X)< 0,\, \ell\in [s_k+1,t_k], && k\in [N-1]
    \end{aligned} \mmid \setOfConstraints^{[N-1]}
    }.
    \label{eq:proof_of_prop2_2}
\end{align}
By Equation~(\ref{eq:linear_bounds_for_f}), we have 
\begin{align}
    &\prob{
    \begin{aligned}
        &\underline{f}(\X)> 0,\\
        &\underline y^{(k)}_{j_{k,\ell}}(\X)\ge 0,\, \ell\in [s_k], && k\in [N-1],\\
        &\bar y^{(k)}_{j_{k,\ell}}(\X)< 0,\, \ell\in [s_k+1,t_k], && k\in [N-1]
    \end{aligned} \mmid \setOfConstraints^{[N-1]}
    }\nonumber\\
    \le \,& \prob{
    \begin{aligned}
        &\bar{f}(\X)> 0,\\
        &\underline y^{(k)}_{j_{k,\ell}}(\X)\ge 0,\, \ell\in [s_k], && k\in [N-1],\\
        &\bar y^{(k)}_{j_{k,\ell}}(\X)< 0,\, \ell\in [s_k+1,t_k], && k\in [N-1]
    \end{aligned} \mmid \setOfConstraints^{[N-1]}
    }.
\label{eq:proof_of_prop2_3}
\end{align}
From Equation~(\ref{eq:proof_of_prop2_1})~(\ref{eq:proof_of_prop2_2})~(\ref{eq:proof_of_prop2_3}),
\begin{align}
    p_\ell &\le \prob{\setOfConstraints^{[N-1]}}\prob{
    \begin{aligned}
        &\bar{f}(\X)> 0,\\
        &\underline y^{(k)}_{j_{k,\ell}}(\X)\ge 0,\, \ell\in [s_k], && k\in [N-1],\\
        &\bar y^{(k)}_{j_{k,\ell}}(\X)< 0,\, \ell\in [s_k+1,t_k], && k\in [N-1]
    \end{aligned} \mmid \setOfConstraints^{[N-1]}
    }
    \nonumber\\
    &=\prob{
    \begin{aligned}
        &\bar{f}(\X)> 0,\\
        &\underline y^{(k)}_{j_{k,\ell}}(\X)\ge 0,\, \ell\in [s_k], && k\in [N-1],\\
        &\bar y^{(k)}_{j_{k,\ell}}(\X)< 0,\, \ell\in [s_k+1,t_k], && k\in [N-1],\\
        &\setOfConstraints^{[N-1]}
    \end{aligned}
    }
    \nonumber\\
    &\le \prob{
    \begin{aligned}
        &\bar{f}(\X)> 0,\\
        &\underline y^{(k)}_{j_{k,\ell}}(\X)\ge 0,\, \ell\in [s_k], && k\in [N-1],\\
        &\bar y^{(k)}_{j_{k,\ell}}(\X)< 0,\, \ell\in [s_k+1,t_k], && k\in [N-1],\\
        &\setOfConstraints^{[N-2]}
    \end{aligned}
    }.
\label{eq:proof_of_prop2_same_argument_end}
\end{align}
Applying the argument of Equation~(\ref{eq:proof_of_prop2_2})-(\ref{eq:proof_of_prop2_same_argument_end}), except that we apply Equation~(\ref{eq:linear_bounds_for_preactivation}) instead of Equation~(\ref{eq:linear_bounds_for_f}) within Equation~(\ref{eq:proof_of_prop2_3}), iteratively to the constrained preactivations in layers $N-1,\ldots,k^*+1$, we obtain
\begin{align}
    p_\ell &\le \prob{
        \begin{aligned}
            &\bar f(\X)> 0,\\
            &\bar y^{(k)}_{j_{k,\ell}}(\X)\ge 0,\, \ell\in [s_k],&&k\in [k^\star+1,N-1]\\
            &\underline y^{(k)}_{j_{k,\ell}}(\X)< 0,\, \ell\in [s_k+1,t_k],&&k\in [k^\star+1,N-1]\\
            &\underline y^{(k)}_{j_{k,\ell}}(\X)\ge 0,\, \ell\in [s_k],&&k\in [k^\star]\\
            &\bar y^{(k)}_{j_{k,\ell}}(\X)< 0,\, \ell\in [s_k+1,t_k],&&k\in [k^\star]\\
            &\underline{y}_{j^\star}^{(k^\star)}(\X)\ge 0,\\
            &\setOfConstraints^{[k^\star-1]}
        \end{aligned}
        }\nonumber \\
        &=\prob{\setOfConstraints^{[k^\star-1]}}\prob{
        \begin{aligned}
            &\bar f(\X)> 0,\\
            &\bar y^{(k)}_{j_{k,\ell}}(\X)\ge 0,\, \ell\in [s_k],&&k\in [k^\star+1,N-1]\\
            &\underline y^{(k)}_{j_{k,\ell}}(\X)< 0,\, \ell\in [s_k+1,t_k],&&k\in [k^\star+1,N-1]\\
            &\underline y^{(k)}_{j_{k,\ell}}(\X)\ge 0,\, \ell\in [s_k],&&k\in [k^\star]\\
            &\bar y^{(k)}_{j_{k,\ell}}(\X)< 0,\, \ell\in [s_k+1,t_k],&&k\in [k^\star]\\
            &\underline{y}_{j^\star}^{(k^\star)}(\X)\ge 0
        \end{aligned}\mmid\setOfConstraints^{[k^\star-1]}
        }.
    \label{eq:proof_of_prop2_use_same_argument}
\end{align}
Applying Equation~(\ref{eq:linear_bounds_for_preactivation}) with $k=k^*$, 
\begin{subequations}
    \begin{align}
        \underline{y}_{j_{k^\star,\ell}}^{(k^\star)}(\x) &\le {y}_{j_{k^\star,\ell}}^{(k^\star)}(\x) \le \bar{y}_{j_{k^\star,\ell}}^{(k^\star)}(\x), &&\forall \x\in\inputset,\,\setOfConstraints^{[k^\star-1]} \text{ satisfied},\, \ell \in [t_{k^\star}],\\
        \underline{y}_{j^\star}^{(k^\star)}(\x) &\le {y}_{j^\star}^{(k^\star)}(\x) \le \bar{y}_{j^\star}^{(k^\star)}(\x), &&\forall \x\in\inputset,\,\setOfConstraints^{[k^\star-1]} \text{ satisfied}.
    \end{align}
    \label{eq:proof_of_prop2_k*_bound}
\end{subequations}
Thus, 
\begin{align}
    &\prob{
    \begin{aligned}
        &\bar f(\X)> 0,\\
        &\bar y^{(k)}_{j_{k,\ell}}(\X)\ge 0,\, \ell\in [s_k],&&k\in [k^\star+1,N-1]\\
        &\underline y^{(k)}_{j_{k,\ell}}(\X)< 0,\, \ell\in [s_k+1,t_k],&&k\in [k^\star+1,N-1]\\
        &\underline y^{(k)}_{j_{k,\ell}}(\X)\ge 0,\, \ell\in [s_k],&&k\in [k^\star]\\
        &\bar y^{(k)}_{j_{k,\ell}}(\X)< 0,\, \ell\in [s_k+1,t_k],&&k\in [k^\star]\\
        &\underline{y}_{j^\star}^{(k^\star)}(\X)\ge 0
    \end{aligned}\mmid\setOfConstraints^{[k^\star-1]}
    }
    \nonumber\\
    \le \, &\prob{
    \begin{aligned}
        &\bar f(\X)> 0,\\
        &\bar y^{(k)}_{j_{k,\ell}}(\X)\ge 0,\, \ell\in [s_k],&&k\in [k^\star,N-1]\\
        &\underline y^{(k)}_{j_{k,\ell}}(\X)< 0,\, \ell\in [s_k+1,t_k],&&k\in [k^\star,N-1]\\
        &\underline y^{(k)}_{j_{k,\ell}}(\X)\ge 0,\, \ell\in [s_k],&&k\in [k^\star-1]\\
        &\bar y^{(k)}_{j_{k,\ell}}(\X)< 0,\, \ell\in [s_k+1,t_k],&&k\in [k^\star-1]\\
        &\underline{y}_{j^\star}^{(k^\star)}(\X)\ge 0
    \end{aligned}\mmid\setOfConstraints^{[k^\star-1]}
    }
    \nonumber\\
     =\, &\prob{
    \begin{aligned}
        &\bar f(\X)> 0,\\
        &\bar y^{(k)}_{j_{k,\ell}}(\X)\ge 0,\, \ell\in [s_k],&&k\in [k^\star,N-1]\\
        &\underline y^{(k)}_{j_{k,\ell}}(\X)< 0,\, \ell\in [s_k+1,t_k],&&k\in [k^\star,N-1]\\
        &\underline y^{(k)}_{j_{k,\ell}}(\X)\ge 0,\, \ell\in [s_k],&&k\in [k^\star-1]\\
        &\bar y^{(k)}_{j_{k,\ell}}(\X)< 0,\, \ell\in [s_k+1,t_k],&&k\in [k^\star-1]\\
        &\underline{y}_{j^\star}^{(k^\star)}(\X)\ge 0, \, \bar{y}_{j^\star}^{(k^\star)}(\X)\ge 0
    \end{aligned}\mmid\setOfConstraints^{[k^\star-1]}
    }.
    \label{eq:proof_of_prop2_k*_almost}
\end{align}
From Equation~(\ref{eq:proof_of_prop2_use_same_argument})~(\ref{eq:proof_of_prop2_k*_almost}), 
\begin{align}
    p_\ell &\le \prob{\setOfConstraints^{[k^\star-1]}}\prob{
    \begin{aligned}
        &\bar f(\X)> 0,\\
        &\bar y^{(k)}_{j_{k,\ell}}(\X)\ge 0,\, \ell\in [s_k],&&k\in [k^\star,N-1]\\
        &\underline y^{(k)}_{j_{k,\ell}}(\X)< 0,\, \ell\in [s_k+1,t_k],&&k\in [k^\star,N-1]\\
        &\underline y^{(k)}_{j_{k,\ell}}(\X)\ge 0,\, \ell\in [s_k],&&k\in [k^\star-1]\\
        &\bar y^{(k)}_{j_{k,\ell}}(\X)< 0,\, \ell\in [s_k+1,t_k],&&k\in [k^\star-1]\\
        &\underline{y}_{j^\star}^{(k^\star)}(\X)\ge 0, \, \bar{y}_{j^\star}^{(k^\star)}(\X)\ge 0
    \end{aligned}\mmid\setOfConstraints^{[k^\star-1]}
    }\nonumber\\
    &=\prob{
    \begin{aligned}
        &\bar f(\X)> 0,\\
        &\bar y^{(k)}_{j_{k,\ell}}(\X)\ge 0,\, \ell\in [s_k],&&k\in [k^\star,N-1]\\
        &\underline y^{(k)}_{j_{k,\ell}}(\X)< 0,\, \ell\in [s_k+1,t_k],&&k\in [k^\star,N-1]\\
        &\underline y^{(k)}_{j_{k,\ell}}(\X)\ge 0,\, \ell\in [s_k],&&k\in [k^\star-1]\\
        &\bar y^{(k)}_{j_{k,\ell}}(\X)< 0,\, \ell\in [s_k+1,t_k],&&k\in [k^\star-1]\\
        &\underline{y}_{j^\star}^{(k^\star)}(\X)\ge 0, \, \bar{y}_{j^\star}^{(k^\star)}(\X)\ge 0,\\
        &\setOfConstraints^{[k^\star-1]}
    \end{aligned}
    }.
\end{align}
Then, again applying the argument of Equation~(\ref{eq:proof_of_prop2_2})-(\ref{eq:proof_of_prop2_same_argument_end}), except that Equation~(\ref{eq:linear_bounds_for_preactivation}) is used instead of Equation~(\ref{eq:linear_bounds_for_f}) within Equation~(\ref{eq:proof_of_prop2_3}), iteratively to the constrained preactivations in layers $k^\star-1,\ldots,1$, we obtain
\begin{align}
    p_\ell &\le \prob{
    \begin{aligned}
        &\bar f(\X)> 0,\\
        &\bar y^{(k)}_{j_{k,\ell}}(\X)\ge 0,\, \ell\in [s_k],&&k\in [N-1]\\
        &\underline y^{(k)}_{j_{k,\ell}}(\X)< 0,\, \ell\in [s_k+1,t_k],&&k\in [N-1]\\
        &\underline{y}_{j^\star}^{(k^\star)}(\X)\ge 0, \, \bar{y}_{j^\star}^{(k^\star)}(\X)\ge 0
    \end{aligned}}
    \nonumber\\
    &=\prob{
    \begin{aligned}
        &\bar f(\X)> 0,\; \underline{y}_{j^\star}^{(k^\star)}(\X)\ge 0, \; \bar{y}_{j^\star}^{(k^\star)}(\X)\ge 0,\\
        &\bar y_{j_\ell}^{(k_\ell)}(\X)\geq 0, \ell \in [s],\; \underline y_{j_\ell}^{(k_\ell)}(\X)<0,\ell\in [s+1,t]
    \end{aligned}
    }.
    \label{eq:proof_of_prop2_finish_bound_pl}
\end{align}
On the other hand, from Proposition~\ref{prop:branch_prob_bounds}, 
\begin{equation}
    p_u=\prob{
    \begin{aligned}
        &\bar f(\X)> 0,\, \bar{y}_{j^\star}^{(k^\star)}(\X)\ge 0,\\
        &\bar y_{j_\ell}^{(k_\ell)}(\X)\geq 0, \ell \in [s],\; \underline y_{j_\ell}^{(k_\ell)}(\X)<0,\ell\in [s+1,t]
    \end{aligned}
    }.
    \label{eq:proof_of_prop2_pu}
\end{equation}
Then, from Equation~(\ref{eq:proof_of_prop2_finish_bound_pl})~(\ref{eq:proof_of_prop2_pu}),
\begin{align}
    p_u-p_\ell\ge \prob{
    \begin{aligned}
        &\bar f(\X)> 0,\;\bar y_{j^\star}^{(k^\star)}(\X)\ge 0,\;  \underline y_{j^\star}^{(k^\star)}(\X)< 0,
        \\
        &\bar y_{j_\ell}^{(k_\ell)}(\X)\geq 0, \ell \in [s],\; \underline y_{j_\ell}^{(k_\ell)}(\X)<0,\ell\in [s+1,t]
    \end{aligned}
    }
\end{align}
\end{proof}

\begin{proof}[Proof of Proposition~\ref{prop:global_prob_bound}]
Notice that after \babprob splits on a preactivation in $B$ and generates $B_1$ and $B_2$, it holds that
\begin{equation}
    \prob{B}=\prob{B_1}+\prob{B_2}.
\end{equation}
Thus, at the beginning each iteration, 
\begin{align}
    \prob{f(\X)>0}=\sum_{B\in\nodelist}\prob{B}.
    \label{eq:proof_of_prop3_used}
\end{align}
By Proposition~\ref{prop:branch_prob_bounds}, for all $B=\langle p_\ell,p_u,\setOfConstraints\rangle\in\nodelist$, $p_\ell$ and $p_u$ are lower bounds on $\prob{B}$.
Therefore, from Equation~(\ref{eq:proof_of_prop3_used}), $P_\ell$ and $P_u$ are lower and upper bounds on $\prob{f(\X)>0}$.
\end{proof}

\begin{proof}[Proof of Proposition~\ref{prop:tight_prob_bounds}]
Let $\underline{f}(\x)$, $\bar f(\x)$ and $\underline \preactivation^{[N-1]}(\x)$, $\bar \preactivation^{[N-1]}(\x)$ be the linear lower and upper bounds for $f$ and $\preactivation^{[N-1]}$ computed by linear bound propagation under the constraints in \setOfConstraints.
Since there is no unstable preactivation in $B$, ino relaxation is performed during the linear bound propagation, the inequalities in Equation~(\ref{eq:linear_bounds_by_linear_bound_propagation}) become equalities.
Thus, all the inclusions `$\subseteq$' in the proof of Lemma~\ref{prop:branch_prob_bounds} become equalities `$=$', and subsequently, the inequality in the proof of Proposition~\ref{prop:branch_prob_bounds} becomes equality.
Therefore, $\prob{B}=p_\ell=p_u$.
\end{proof}

\begin{proof}[Proof of Proposition~\ref{prop:finite_time_termination}]
    We prove Proposition~\ref{prop:finite_time_termination} by contradiction.
    Suppose that \babprob does not terminate in finite time.
    Since there are only finitely many preactivations, there must exist a point at which every branch in \nodelist contains no unstable preactivation.
    At that point, by Proposition~\ref{prop:tight_prob_bounds}, all branches have exactly tight probability bounds.
    Consequently, $P_\ell = P_u$, which implies that \babprob has already terminated — a contradiction.
\end{proof}
\begin{proof}[Proof of Corollary~\ref{cor:sound_and_complete}]
    It follows directly from Proposition~\ref{prop:global_prob_bound} and Proposition~\ref{prop:finite_time_termination}.
\end{proof}

\section{Experiments Details}
\label{appendix:experiments}
\subsection{Toy models (soundness and completeness check)}
\label{appendix:toy_models}
\begin{table}[t!]
    \centering
    \resizebox{\textwidth}{!}{
    \begin{tabular}{l|cccccc|cccccc}
        \toprule
        & \multicolumn{6}{c|}{MLP model} & \multicolumn{6}{c}{CNN model}\\
        \midrule
        & T/T & F/T & N/F & T/F & F/F & N/F & T/T & F/T & N/F & T/F & F/F & N/F\\
        \midrule
        \proven 
        & 4/24 & 0/24 & 20/24 & 0/6 & 0/6 & 6/6 
        & 5/25 & 0/25 & 20/25 & 0/5 & 0/5 & 5/5\\
        \pv 
        & 24/24 & 0/24 & 0/24 & 0/6 & 6/6 & 0/6 
        & 25/25 & 0/25 & 0/25 & 0/5 & 5/5 & 0/5\\
        \sdp 
        & 0/24 & 24/24 & 0/24 & 0/6 & 6/6 & 0/6 
        & 0/25 & 25/25 & 0/25 & 0/5 & 5/5 & 0/5\\
        \babprobvanilla
        & 24/24 & 0/24 & 0/24 & 0/6 & 6/6 & 0/6 
        & 25/25 & 0/25 & 0/25 & 0/5 & 5/5 & 0/5\\
        \babprobbabsrprob
        & 24/24 & 0/24 & 0/24 & 0/6 & 6/6 & 0/6 
        & 25/25 & 0/25 & 0/25 & 0/5 & 5/5 & 0/5\\
        \bottomrule
    \end{tabular}
    }
    \caption{Results for toy models. Each cell indicates the number of instances where the algorithm's declaration  (“T” for True, “F” for False, “N” for no declaration) aligns with the ground truth (``T'' or ``F''). For example, “T/T” means the ground truth is True and the algorithm declares it as True; “F/T” means the ground truth is True but the algorithm declares it as False; “N/T” means the ground truth is True but the algorithm fails to provide a declaration. Similar interpretations apply to the other entries.}
    \label{tab:toy_examples}
\end{table}
\textbf{Setup.\em}
The MLP model has input dimension 5, two hidden FC layers (10 units each), ReLU after every hidden layer, and a scalar output. 
The CNN model takes a $1\times4\times4$ input, has two Conv2d layers (3 channels, kernel 3, stride 2), ReLU after convolutions, then flatten + scalar FC.
All weights/biases are i.i.d. $\mathcal{N}(0,0.25)$.
For each architecture we generate 30 instances, i.e., 30 different networks, such that  $\textbf{x}_0=0$ and $f(\textbf{x}_0)>0$.
Noise is $\mathcal{N}(0,0.1\mathbf{I})$.
Because \proven, \pv and \babprob assume bounded inputs, we truncate the Gaussian to its 99.7 \%-confidence ball.
We use batch size of 16384 for both versions of \babprob and \pv, and split depth (maybe splitting multiple preactivations at one time) of 1 for this experiment.
To verify the soundness and completeness, we do not set a time limit in this experiment.
We use a toy MLP model and a toy CNN model to test the soundness and completeness of different solvers.
We use \texttt{SaVer-Toolbox}\footnote{\url{https://github.com/vigsiv/SaVer-Toolbox}}~(\cite{sivaramakrishnan2024saver}) to obtain an empirical reference $\hat{P}$ for $\prob{f(\X)>0}$ such that the deviation is $<0.1\%$ with confidence $>1-10^{-4}$.
We treat the declaration of \texttt{SaVer-Toolbox} as the ground truth.

\textbf{Results.\em}
Table~\ref{tab:toy_examples} reports the counts of (declared \true/\false/No-declaration) vs. ground truth.
Both \babprob versions and \pv match ground truth on all toy problems; \proven returns only a subset (sound but incomplete), and \sdp is mixed. 
It should be noted that this does not indicate that \sdp is unsound, because it does not declare a \true problem as \false.

\subsection{Other Experiment Setups}

\textbf{Untrained models.\em}
The architecture for the untrained MLP models are same as that for the toy MLP model with the difference that the number of input features $D_i$, the number of hidden features $D_h$, and the number of hidden layers $N_h$ are not fixed.
The architecture for the untrained CNN models are same as that for the toy CNN model with the difference that the input shape $(1,W_i,H_i)$ with $W_i=H_i$, the number of hidden channels $C_h$, and the number of hidden layers $N_h$ are not fixed.
The weights and biases of all layers in the MLP and CNN models are also randomized with Gaussian distribution $\mathcal{N}\!\left(0, 0.25\right)$.
We test on different combination of $(D_i,D_h,N_h)$ for MLP models and $(W_i/H_i, C_h, N_h)$ for CNN models.
For each combination, we generated 30 different problems, each consists of a sample $\mathbf{x}_0=0$ and a randomly generated network $f$ such that $f(\mathbf{x}_0)>0$.
The noise is zero-mean with diagonal covariance, chosen so that its 99.7\%-confidence ball has radius 0.002 for MLP models and 0.01 for CNN models.
We use batch size of 4 for \babprob and \pv, and split depth of 1 for this experiment.

\textbf{MNIST and CIFAR-10 models.\em}
The MNIST and CIFAR-10 models have similar architecture as untrained models with the difference that the number of input features (or input shape) is fixed, the number of output features is 10, and the trained CNN models have a BatchNorm2d layer before each ReLU layer.
We trained MLP models with different combination of $(D_h,N_h)$ and CNN models with different combination of $(C_h, N_h)$.
The models are trained with cross-entropy loss and Adam optimizer.
The batch size for training is 64.
The learning rate is 0.001.
We train each model for 20 iterations and use the checkpoint with the lowest loss on validation dataset for verification.
For each model, we randomly selected 30 correctly classified samples from the training set.
The noise is zero-mean with diagonal covariance, chosen so that its 99.7\%-confidence ball has radius 0.02 for MNIST models and 0.01 for CIFAR-10 models.
The output specification for each sample is $\outputset=\{\mathbf{y}\in \real^{10}:(\mathbf{e}_{t}-\mathbf{e}_a)^\mathsf{T}\mathbf{y}> 0\}$, where $\mathbf{e}_{t}$ and $\mathbf{e}_a$ are the standard basis vectors, $t$ is the index of the ground-truth label and $a\neq t$ is a randomly selected attacking label.
$f(\x)\in \outputset$ indicates that $\x$ is not misclassified by index $a$.
We use batch size of 4 for \babprob and \pv, and split depth of 1 for this experiment.

\textbf{VNN-COMP 2025 benchmarks.\em}
We conducted evaluations on the following benchmark suites: \texttt{acasxu\_2023}, \texttt{cersyve}, \texttt{cifar100\_2024}, \texttt{collins\_rul\_cnn\_2022}, \texttt{cora\_2024}, \texttt{linearizenn\_2024}, \texttt{relusplitter}, and \texttt{safenlp\_2024}.
Due to GPU memory limitations, for \texttt{cifar100\_2024} we evaluated only the medium models. For \texttt{relusplitter}, we tested all MNIST models but only the oval21 models among the CIFAR-10 models.
Models from other benchmark datasets were excluded either because they contained layer types not supported by our method or were too large to fit within GPU memory.
For simplicity, we randomly selected one input region and one output specification from each original problem. The noise distribution is zero-mean with diagonal covariance, scaled such that its 99.7\%-confidence ellipsoid matches the axis-aligned radii given in the original problem.
In terms of solver configuration, we used a batch size of 8 for both \babprob and \pv, with a split depth of 2.
For the \texttt{cifar100\_2024} benchmark, the batch size and split depth were set to 1 for \babprob, and the batch size was set to 4 for \pv. 
Furthermore, since the original radii in \texttt{cifar100\_2024} were too small to present a meaningful verification challenge, we doubled their values.

\subsection{Experimental Results for \sdp}
\textbf{Untrained models.\em}
\sdp failed to complete any MLP problem within the time limit for models with $D_i=256$, $D_h=16/32/64/128$, $N_h=4$, and ran out of RAM on the remaining MLP problems as well as on all CNN problems.

\textbf{MNIST and CIFAR-10 models.\em}
\sdp ran out of RAM on all problems.

\textbf{VNN-COMP 2025 benchmarks.\em}
The \sdp solver does not directly support \texttt{cersyve}, \texttt{cifar100\_2024}, \texttt{linearizenn\_2024}.
Besides, it ran out of RAM on \texttt{cora\_2024}, \texttt{collins\_rul\_cnn\_2022} and \texttt{relusplitter}.
On \texttt{acasxu\_2023} and \texttt{safenlp\_2024}, it hit the time limit on all problems.

\subsection{Confidence of \babprob}
\label{appendix:confidence_level}

\begin{figure}
    \centering
    \hfill
    \subfloat[\babprobvanilla]{\includegraphics[width=0.45\textwidth]{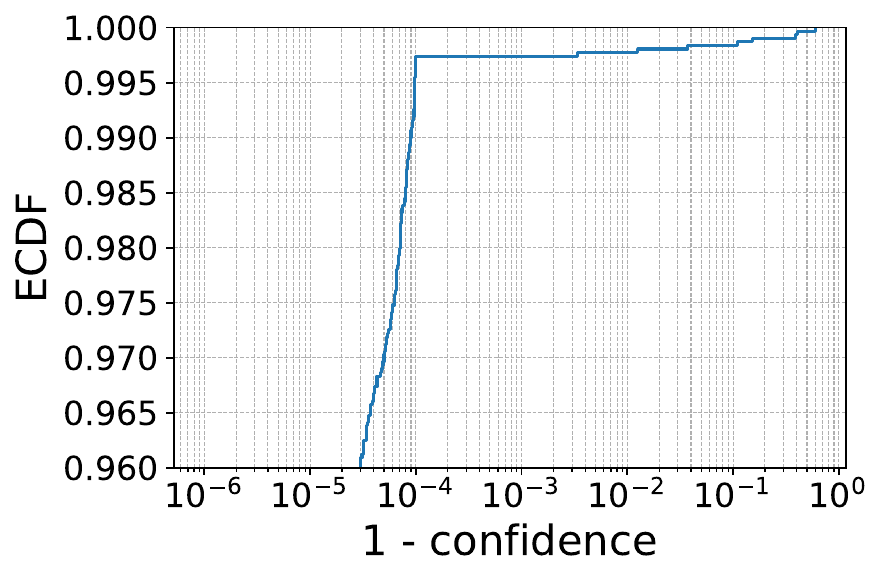}}
    \hfill
    \subfloat[\babprobbabsrprob]{\includegraphics[width=0.45\textwidth]{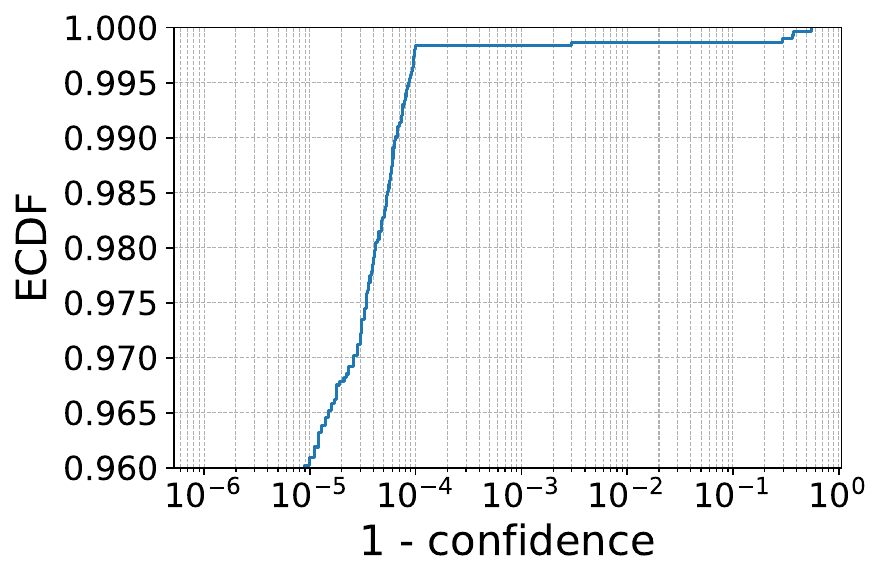}}
    \hfill
    
    \caption{ECDF of confidence for \babprobvanilla and \babprobbabsrprob.}
    \label{fig:confidence_level}
\end{figure}

\underline{\textbf{Derivation of confidence}}

In our experiments, \babprob evaluates the per-branch probability in Equation~\ref{eq:compute_branch_prob_bounds} by Monte Carlo sampling, using $N=10^5$ i.i.d. samples for each probability it needs to compute.
Let $P_\ell$ and $P_u$ be the true global lower and upper probability bounds, and let $\hat{P}_\ell$ and $\hat{P}_u$ be their empirical values.
When \babprob terminates, either $\hat{P}_\ell \ge \probSatis$ or $\hat{P}_u<\probSatis$.
Then, the following proposition gives the confidence for the certification.
\begin{proposition}
\begin{subequations}
    \begin{equation}
        \prob{P_\ell\ge \eta}\ge 1-\exp\left(-\frac{N(\hat{P}_\ell- \eta)^2}{2V_1+\frac{2}{3}(\hat{P}_\ell- \eta)}\right), \quad \text{if } \hat{P}_\ell\ge \eta;
    \end{equation}
    \begin{equation}
        \prob{P_u< \eta}\ge 1-\exp\left(-\frac{N(\eta-\hat{P}_u)^2}{2V_2+\frac{2}{3}(\eta-\hat{P}_u)}\right), \quad \text{if } \hat{P}_u< \eta.
    \end{equation}
\end{subequations}
    where
    \begin{subequations}
    \begin{align}
        &V_1 = \sum_{\langle p_\ell,p_u,\setOfConstraints\rangle \in\nodelist}p_\ell(1-p_\ell),\\
        &V_2 = \sum_{\langle p_\ell,p_u,\setOfConstraints\rangle \in\nodelist}p_u(1-p_u).
    \end{align}
    \label{eq:variance_term}
    \end{subequations}
\end{proposition}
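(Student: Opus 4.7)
My plan is to rewrite $N\hat P_\ell$ and $N\hat P_u$ as sums of independent Bernoulli variables, apply the classical one-sided Bernstein inequality, and then read off the proposition by instantiating Bernstein's deviation at the observed gaps $\hat P_\ell-\eta$ and $\eta-\hat P_u$.

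First, for every branch $B=\langle p_{\ell,B},p_{u,B},\setOfConstraints_B\rangle\in\nodelist$, I will write $\hat p_{\ell,B}=N^{-1}\sum_{i=1}^N Z_{B,i}$, where $Z_{B,i}$ is the indicator that the $i$-th Monte Carlo draw for branch $B$ satisfies the linear system inside the probability in Equation~(\ref{eq:compute_branch_prob_bounds}). Since $N$ i.i.d.\ samples are drawn afresh for each probability \babprob evaluates, the collection $\{Z_{B,i}\}_{B,i}$ is mutually independent with $Z_{B,i}\sim\operatorname{Bernoulli}(p_{\ell,B})$. Hence $S_\ell\coloneqq N\hat P_\ell$ is a sum of $N|\nodelist|$ independent Bernoulli variables with $\mathbb{E}[S_\ell]=NP_\ell$, $\mathrm{Var}(S_\ell)=NV_1$, and centered terms taking values in $[-1,1]$; the analogous construction, using $\bar{\mathbf P}_B$ and $\bar{\mathbf q}_B$, turns $S_u\coloneqq N\hat P_u$ into a sum with mean $NP_u$ and variance $NV_2$.

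Second, the one-sided Bernstein inequality applied to the lower tail of $S_\ell$ and the upper tail of $S_u$ gives, for any fixed $t>0$,
\begin{equation*}
\prob{P_\ell-\hat P_\ell\ge t}\le\exp\!\left(-\frac{Nt^2}{2V_1+\frac{2}{3}t}\right),\qquad \prob{\hat P_u-P_u\ge t}\le\exp\!\left(-\frac{Nt^2}{2V_2+\frac{2}{3}t}\right),
\end{equation*}
the standard Bernstein denominator $2\mathrm{Var}(S_\bullet)+\frac{2M}{3}(Nt)$ with $M=1$ collapsing to these expressions after one factor of $N$ cancels. Instantiating $t=\hat P_\ell-\eta\ge 0$ in the first bound and $t=\eta-\hat P_u>0$ in the second reduces the events $\{P_\ell\le\hat P_\ell-t\}$ and $\{P_u\ge\hat P_u+t\}$ to $\{P_\ell\le\eta\}$ and $\{P_u\ge\eta\}$ respectively, which yields exactly the two certificates (a) and (b).

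The main obstacle is this last instantiation: Bernstein's inequality is stated for a \emph{deterministic} threshold, whereas $t=\hat P_\ell-\eta$ (resp.\ $t=\eta-\hat P_u$) is a function of the sampled data. The claim should therefore be read as the standard data-dependent plug-in confidence used in Monte Carlo certification, where the observed gap serves as the realized deviation; a fully rigorous frequentist version can be obtained either via an empirical-Bernstein inequality or by a supremum over a logarithmic grid of thresholds together with a union bound, incurring only a mild multiplicative penalty. A secondary technicality is that $V_1$ and $V_2$ themselves depend on the true unknown $p_{\ell,B}$ and $p_{u,B}$; in implementation these are naturally replaced by their Monte Carlo estimates via empirical Bernstein, or upper-bounded crudely by $|\nodelist|/4$.
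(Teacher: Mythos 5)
Your proposal matches the paper's own proof essentially step for step: the same decomposition of $\hat P_\ell-P_\ell$ into $N|\nodelist|$ independent centered Bernoulli summands bounded by $1/N$ with total variance $V_1/N$, the same application of Bernstein's inequality, and the same plug-in of the observed gap $\varepsilon=\hat P_\ell-\eta$ (your two caveats---the data-dependent threshold and the replacement of the true $p_\ell,p_u$ in $V_1,V_2$ by empirical estimates---are also acknowledged, not resolved, by the paper). The only slip is that your displayed tail $\prob{P_\ell-\hat P_\ell\ge t}$ has the deviation in the wrong direction (the relevant event is $\hat P_\ell-P_\ell\ge t$, as your own instantiation sentence correctly states), which is harmless here since the summands lie in $[-1/N,1/N]$ and both tails admit the identical Bernstein bound.
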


\begin{proof}
We prove the case of $\hat{P}_\ell\ge \eta$, and $\hat{P}_u<\eta$ can be proved similarly.

Denote $p_B\coloneq p_\ell$ for $B=\langle p_\ell,p_u,\setOfConstraints\rangle \in\nodelist$.
Let $\hat{p}_B$ be the empirical estimation for $p_B$ by Monte Carlo Sampling.
Then, 
\begin{equation}
    \hat{p}_B=\frac{1}{N}\sum_{i=1}^N Z_{B,i}, \quad Z_{B,i}\sim\text{Bernoulli}(p_B),
\end{equation}
and
\begin{equation}
    P_\ell=\sum_{B\in\nodelist}p_B,\quad \hat{P}_\ell=\sum_{B\in\nodelist}\hat{p}_B.
\end{equation}
Consider the estimation error
\begin{equation}
    \hat{P}_\ell-P_\ell=\sum_{B\in\nodelist}\sum_{i=1}^N\frac{1}{N}(Z_{B,i}-p_B).
\end{equation}
The summands are independent, mean-zero, and bounded in $[-\frac{1}{N},\frac{1}{N}]$; their total variance is
\begin{equation}
    \text{Var}\big(\sum_{B\in\nodelist}\sum_{i=1}^N\frac{1}{N}(Z_{B,i}-p_B)\big)=\frac{1}{N^2}\sum_{B\in\nodelist}\sum_{i=1}^N p_B(1-p_B)=\frac{1}{N}V_1.
\end{equation}
Applying Bernstein's inequality with $\varepsilon=\hat{P}_\ell- \eta$~(\cite{bernstein_ineq}),
\begin{equation}
    \prob{\hat{P}_\ell-P_\ell\ge \varepsilon}
    \le \exp\left( -\frac{\frac{1}{2}\varepsilon^2}{\frac{1}{N}V_1+\frac{1}{3N}\varepsilon}\right)
    =\exp\left(-\frac{N\varepsilon^2}{2V_1+\frac{2}{3}\varepsilon}\right).
\end{equation}
Therefore,
\begin{equation}
    \prob{P_\ell\ge \eta}\ge 1-\exp\left(-\frac{N\varepsilon^2}{2V_1+\frac{2}{3}\varepsilon}\right)
\end{equation}
\end{proof}
The true values of $p_\ell$ and $p_u$ in Equation~\ref{eq:variance_term} are not directly accessible, so we use their empirical results as replacement.
In our experiments, if \babprobvanilla or \babprobbabsrprob produces a declaration, that is, $\hat{P}_l\ge \eta$ or $\hat{P}_u<\eta$, but with confidence below $1-10^{-4}$, the algorithm continues running until the confidence reaches $1-10^{-4}$ or the time limit is hit.
If when the algorithm terminates with a declaration but with confidence remaining below $1-10^{-4}$, we still count it a successful verification.
The following results provide a statistical characterization of the achieved confidence levels.

\underline{\textbf{Confidence results}}

Figure~\ref{fig:confidence_level} presents the Empirical Cumulative Distribution Function (ECDF) of confidence values for \babprobvanilla and \babprobbabsrprob across all successfully verified problems.
Both methods achieve confidence greater than $1-10^{-4}$ in over 99.5\% of cases.
This demonstrates that, in practice, the vast majority of problems are certified with very high confidence by both \babprobvanilla and \babprobbabsrprob.

\end{document}